\title{Efficient and Differentiable Conformal Prediction with \\ General Function Classes}
\author{
  Yu Bai\thanks{Salesforce Research. Email: \texttt{yu.bai@salesforce.com}.}
  \and
  Song Mei\thanks{UC Berkeley. Email: \texttt{songmei@berkeley.edu}.}
  \and
  Huan Wang\thanks{Salesforce Research. Email: \texttt{\{huan.wang, yingbo.zhou, cxiong\}@salesforce.com}.}
  \and
  Yingbo Zhou\footnotemark[3]
  \and
  Caiming Xiong\footnotemark[3]
}
\renewcommand{\setto}{\leftarrow}
\renewcommand{\cal}{{\rm cal}}
\newcommand{\recal}{{\rm recal}}
\newcommand{\train}{{\rm train}}
\newcommand{\test}{{\rm test}}
\newcommand{\vc}{{\rm VC}}
\newcommand{\VC}{{\rm VC}}
\newcommand{\eff}{{\rm eff}}
\newcommand{\coverage}{{\rm coverage}}
\newcommand{\Coverage}{{\rm Coverage}}
\renewcommand{\epsilon}{\eps}
\newcommand{\cdflower}{{\underline{c}_0}}
\newcommand{\cdfradius}{{\delta_0}}
\newcommand{\Lipt}{{L_{\mc{T}}}}
\newcommand{\BT}{{B_\mc{T}}}
\newcommand{\BTheta}{{B_{\Theta}}}
\newcommand{\BPhi}{{B_{\Phi}}}
\newcommand{\Bsigma}{{B_{\sigma}}}
\newcommand{\up}{{\rm hi}}
\newcommand{\low}{{\rm lo}}
\newcommand{\length}{{\rm length}}
\newcommand{\zeroone}{{01}}
\renewcommand{\hinge}{{\rm hinge}}
\newcommand{\out}{{\rm out}}
\newcommand{\pinball}{{\rm pinball}}
\newcommand{\cpgen}{{\tt CP-Gen}}
\newcommand{\cpgenrecal}{{\tt CP-Gen-Recal}}
\newcommand{\cqr}{{\tt CQR}}
\newcommand{\coord}{{\tt Coord-wise}}
\newcommand{\coordrecal}{{\tt Coord-wise-Recal}}
\newcommand{\maxscore}{{\tt Max-score-Conformal}}
\newcommand{\qrpinball}{{\tt CQR-PinballFt}}
\def\shownotes{1}  %
\newcommand{\authnote}[2]{{\scriptsize $\ll$\textsf{#1 notes: #2}$\gg$}}
\newcommand{\authnote}[2]{}
\colorlet{linkequation}{blue}
\def\blfootnote{\gdef\@thefnmark{}\@footnotetext}
\begin{document}

\maketitle

\begin{abstract}
  Quantifying the data uncertainty in learning tasks is often done by learning a prediction interval or prediction set of the label given the input. Two commonly desired properties for learned prediction sets are \emph{valid coverage} and \emph{good efficiency} (such as low length or low cardinality). Conformal prediction is a powerful technique for learning prediction sets with valid coverage, yet by default its conformalization step only learns a single parameter, and does not optimize the efficiency over more expressive function classes.

  In this paper, we propose a generalization of conformal prediction to multiple learnable parameters, by considering the constrained empirical risk minimization (ERM) problem of finding the most efficient prediction set subject to valid empirical coverage. This meta-algorithm generalizes existing conformal prediction algorithms, and we show that it achieves approximate valid population coverage and near-optimal efficiency within class, whenever the function class in the conformalization step is low-capacity in a certain sense. Next, this ERM problem is challenging to optimize as it involves a non-differentiable coverage constraint. We develop a gradient-based algorithm for it by approximating the original constrained ERM using differentiable surrogate losses and Lagrangians. Experiments show that our algorithm is able to learn valid prediction sets and improve the efficiency significantly over existing approaches in several applications such as prediction intervals with improved length, minimum-volume prediction sets for multi-output regression, and label prediction sets for image classification.
\blfootnote{Code available at~\url{https://github.com/allenbai01/cp-gen}.}
\end{abstract}

\section{Introduction}

Modern machine learning models can yield highly accurate predictions in many applications. As these predictions are often used in critical decision making, it is increasingly important to accompany them with an uncertainty quantification of how much the true label may deviate from the prediction. A common approach to quantifying the uncertainty in the data is to learn a \emph{prediction set}---a set-valued analogue of usual (point) predictions---which outputs a \emph{subset} of candidate labels instead of a single predicted label. For example, this could be a prediction interval for regression, or a discrete label set for multi-class classification. A common requirement for learned prediction sets is that it should achieve \emph{valid coverage}, i.e. the set should cover the true label with high probability (such as 90\%) on a new test example~\citep{lawless2005frequentist}. In addition to coverage, the prediction set is often desired to have a good \emph{efficiency}, such as a low length or small cardinality~\citep{lei2018distribution,sadinle2019least}, in order for it to be informative. Note that coverage and efficiency typically come as a trade-off, as it is in general more likely to achieve a better coverage using a larger set.

This paper is concerned with the problem of finding the most efficient prediction set with valid coverage. Our approach builds on conformal prediction~\citep{vovk2005algorithmic}, a powerful framework for generating prediction sets from (trained) base predictors with finite-sample coverage guarantees. Conformal prediction has been used for learning prediction sets in a variety of tasks in regression~\citep{lei2014distribution,lei2018distribution,romano2019conformalized}, classification~\citep{cauchois2020knowing,romano2020classification,angelopoulos2020uncertainty}, structured prediction~\citep{bates2021distribution}, and so on.
However, the conformalization step in conformal prediction by default does not offer the flexibility for optimizing additional efficiency metrics, as the efficiency is already determined by the associated score function and the target coverage level.
As a concrete example, the Conformalized Quantile Regression algorithm learns a single width adjustment parameter that turns a two-sided quantile predictor into a prediction interval of valid coverage~\citep{romano2019conformalized}; however, it does not offer a way of further optimizing its length (cf. Figure~\ref{figure:fig1} Left).

For certain efficiency metrics and prediction tasks, several approaches have been proposed, for example by designing a better score function~\citep{angelopoulos2020uncertainty}, using base predictors of a specific form~\citep{izbicki2019flexible, izbicki2020cd, sadinle2019least}, or selecting a best training hyperparameter~\citep{yang2021finite}.
However, optimizing the efficiency for more general tasks or efficiency metrics still largely requires ``manual'' efforts by the researcher, as it (1) often relies on specific domain knowledge about the task at hand; (2) is often done in conjunction with conformal prediction in multiple rounds of trial-and-error; (3) is often done by reasoning about high-level properties of the efficiency loss and coverage constraints (e.g. what makes the length short), but not by directly optimizing the efficiency-coverage trade-off in a data-dependent way. To the best of our knowledge, there is a lack of a more principled and unified approach for optimizing any efficiency metric subject to valid coverage over any class of prediction sets.

\begin{figure*}[t]
  \centering
  \vspace{-1em}
  \begin{minipage}{0.49\textwidth}
    \centering
    \label{figure:sim-left}
    \includegraphics[width=0.98\textwidth]{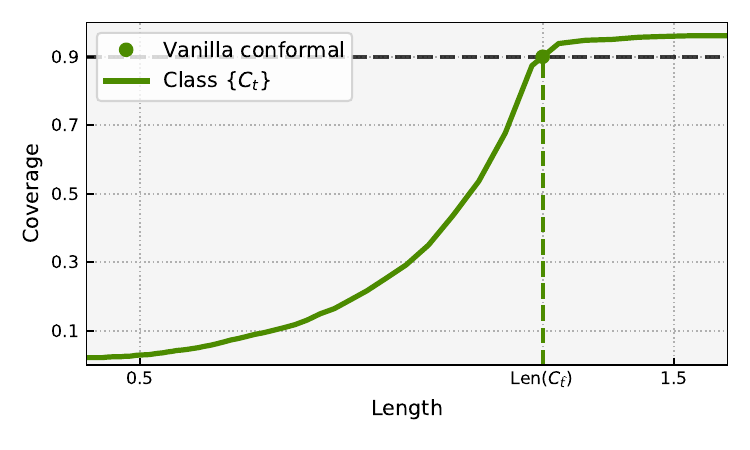}
  \end{minipage}
  \begin{minipage}{0.49\textwidth}
    \centering
    \label{figure:sim-right}
    \includegraphics[width=0.98\textwidth]{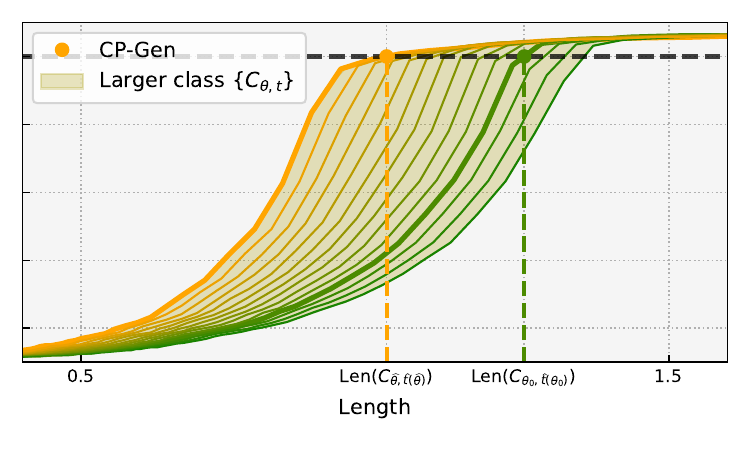}
  \end{minipage}
  \vspace{-1em}
  \caption{
    \small Comparison of vanilla conformal prediction and our \cpgen~for learning a prediction interval with $90\%$ nominal coverage on a real-world regression task. Left: The function class $\set{C_t}$ is the prediction intervals used by Conformalized Quantile Regression. Right: $\set{C_{\theta, t}}$ is a larger class of intervals used by our conformal quantile finetuning procedure with the same base predictor; here the additional trainable parameter $\theta$ is the last linear layer within the quantile neural network (cf. Section~\ref{section:cqf} and Appendix~\ref{appendix:fig1-details} for more details).
  }
  \vspace{-1em}
  \label{figure:fig1}
\end{figure*}

In this paper, we cast the above task as a constrained empirical risk minimization (ERM) problem of optimizing the efficiency subject to the coverage constraint, over any general function class of prediction sets with potentially multiple learnable parameters. This is motivated by a simple observation that vanilla conformal prediction is already equivalent to solving such a constrained ERM with one learnable parameter (Section~\ref{section:conformal}). Overall, our algorithm can be viewed as an automatic and data-dependent approach for optimizing the efficiency simulatneously with conformal prediction. Our contributions are summarized as follows.
\begin{itemize}[leftmargin=1em]
\item We propose \cpgen~(Conformal Prediction with General Function Class), a generalization of conformal prediction to learning multiple parameters. \cpgen~selects within an arbitrary class of prediction sets by solving the constrained ERM problem of best efficiency subject to valid empirical coverage (Section~\ref{section:multi-conformal-algorithm}), and is a systematic extension of existing algorithms.

\item We show theoretically that \cpgen~achieves approximately valid coverage and near-optimal efficiency within class, whenever the class is low-capacity with respect to both the coverage and the efficiency loss (Section~\ref{section:multi-conformal-theory}, with concrete examples in Appendix~\ref{appendix:examples}). We also provide a practical variant \cpgenrecal~using data splitting and reconformalization, which achieves exact coverage, as well as good efficiency under additional assumptions (Section~\ref{section:multi-conformal-reconformalize}).
 
\item To address the issue that \cpgen~and \cpgenrecal~involve a non-differentiable coverage constraint, we develop a differentiable approximation using surrogate losses and Lagrangians (Section~\ref{section:optim}). This allows us to solve the constrained ERM problem over higher-dimensional continuous parameter spaces via gradient-based optimization, and is more flexible than existing algorithms that require discretization and brute-force search.

\item We empirically demonstrate that \cpgenrecal~with our gradient-based implementation can learn prediction sets with valid coverage and significantly improved efficiency on three real-data tasks: prediction intervals for regression with improved length, minimum-volume prediction sets for multi-output regression, and label prediction sets for ImageNet (Section~\ref{section:experiment} \& Appendix~\ref{appendix:imagenet}).
\end{itemize}

We illustrate our main insight via the coverage-vs-efficiency trade-off plots in Figure~\ref{figure:fig1}: While vanilla conformal prediction only learns a single parameter (within its conformalization step) by a simple thresholding rule over a coverage-efficiency \emph{curve}, our \cpgen~is able to further improve the efficiency by thresholding a \emph{region} formed by a larger function class.

\subsection{Related work}

\paragraph{Learning prediction sets via conformal prediction}
The framework of conformal prediction for learning prediction sets is originated in the early works of~\citep{vovk1999machine,vovk2005algorithmic, shafer2008tutorial}. The main advantage of conformal prediction is that it yields (marginal) coverage guarantees regardless of the data distribution (i.e. distribution-free). More recently, conformal prediction has been applied to a variety of uncertainty quantification tasks, such as prediction intervals for regression~\citep{papadopoulos2008inductive, vovk2012conditional, vovk2015cross, lei2014distribution, vovk2018cross, lei2018distribution, romano2019conformalized,izbicki2019flexible,guan2019conformal,gupta2019nested,kivaranovic2020adaptive, barber2021predictive,foygel2021limits}, label prediction sets for classification problems \citep{lei2013distribution,sadinle2019least,romano2020classification, cauchois2020knowing,cauchois2020robust,angelopoulos2020uncertainty}, and prediction sets for structured output~\citep{bates2021distribution}.

\paragraph{Optimizing efficiency in addition to valid coverage}

The problem of finding a prediction set with (approximate) valid coverage and small size has been considered, e.g. in~\citet{pearce2018high,chen2021learning}~for regression and~\citet{park2019pac} for classification; however, these approaches do not use conformal prediction.~\citet{yang2021finite}~propose to minimize the length of the conformal interval over either a finite class or a linear aggregation of base predictors, and provides coverage and efficiency guarantees. All above works formulate this task as a risk minimization problem, yet are restricted to considering either finite classes or specific efficiency loss functions.
Our work is inspired by~\citep{yang2021finite} and generalizes the above works by allowing any function class and efficiency loss, along with providing a differentiable approximate implementation.

The problem of optimizing the efficiency can also be done by utilizing structures of the particular efficiency loss to choose a specific base predictor and an associated prediction set~\citep{lei2014distribution,sadinle2019least,izbicki2019flexible,izbicki2020cd}. By contrast, our approach does not require either the efficiency loss or the base predictor to possess any structure, and is thus complementary.

\paragraph{Other algorithms and theory}

An alternative line of work constructs prediction intervals / prediction sets by aggregating the prediction over multiple base predictors through Bayesian neural network~\citep{mackay1992bayesian, gal2016dropout, kendall2017uncertainties,malinin2018predictive,maddox2019simple} or ensemble methods~\citep{lakshminarayanan2016simple, ovadia2019can, huang2017snapshot,malinin2019ensemble}. However, these methods do not typically come with (frequentist) coverage guarantees. The recent work of~\citet{hoff2021bayes} studies ways of enhancing Bayes-optimal prediction with frequentist coverage.
Prediction intervals can also be obtained by parameter estimation using a parametric model for the data ~\citep{cox1975prediction,bjornstad1990predictive,beran1990calibrating,barndorff1996prediction,hall1999prediction,lawless2005frequentist}; see~\citep{tian2020methods} for a review. However, the coverage of such prediction intervals relies heavily on the parametric model being correct (well-specified), and can even fail in certain high-dimensional regimes where the model is indeed correct~\citep{bai2021understanding}.

\section{Preliminaries}

\paragraph{Uncertainty quantification via prediction sets}
We consider standard learning problems in which we observe a dataset $D$ of examples $(x_i,y_i)\in\mc{X}\times \mc{Y}$ from some data distribution, and wish to predict the label $y$ from the input $x$. A \emph{prediction set} is a set-valued function $C:\mc{X}\to 2^{\mc{Y}}$ where $C(x)$ is a subset of $\mc{Y}$. Two prevalent examples are regression ($\mc{Y}=\R$) in which we can choose $C(x)\subset \R$ as a \textbf{prediction interval}, and (multi-class) classification ($\mc{Y}=[L]\defeq \set{1,\dots,L}$) in which we can choose $C(x)\subset [L]$ as a (discrete) \textbf{label prediction set}.

\paragraph{Coverage and efficiency}
The (marginal) coverage probability (henceforth \emph{coverage}) of a prediction set $C$ is defined as
\begin{align*}
  \Coverage(C) \defeq \P\paren{Y \in C(X)}
\end{align*}
where $(X,Y)$ is a test example from the same data distribution. We also define the (mis)-coverage loss $L_{\coverage}(C)\defeq 1 - \Coverage(C) = \P(Y\notin C(X))$. A learned prediction set is often desired to achieve \emph{valid coverage} in the sense that $\Coverage(C)\ge 1-\alpha$ for some $\alpha\in(0,1)$. Here $1-\alpha$ is a pre-determined target coverage level; typical choices are e.g. $1-\alpha\in\set{90\%, 95\%}$, which corresponds to picking $\alpha\in\set{0.1,0.05}$.

In addition to valid coverage, it is often desired that the prediction set has a good \emph{efficiency} (such as small size). This is motivated by the fact that valid coverage can be achieved trivially if we do not care about the size, e.g. by always outputting $C=\mc{Y}$, which is not informative. Throughout this paper we will use $\ell_{\eff}$ to denote the particular efficiency loss we care about, where $\ell_{\eff}(C; (x, y))$ measures the efficiency loss of $C$ on an example $(x,y)$, such as the length (Lebesgue measure) of prediction intervals, or the size (cardinality) of label prediction sets.

\paragraph{Nested set framework}
We adopt the nested set framework of~\citep{gupta2019nested} for convenience for our presentation and analysis. A family $\set{C_t}_{t\in\mc{T}\subset \R}$ is said to be a (family of) nested sets if $t\le t'$ implies that $C_t(x)\subset C_{t'}(x)$ for all $x\in\mc{X}$. Throughout this paper out notation $C_t$ or $C_{\theta, t}$ are assumed to be nested sets with respect to $t$. We assume that our efficiency loss $\ell_{\eff}$ is non-decreasing w.r.t. its (set-valued) argument, i.e. $\ell_{\eff}(C; (x, y)) \le \ell_{\eff}(C'; (x, y))$ if $C\subseteq C'$. Therefore, for nested sets the loss $t\mapsto \ell_{\eff}(C_t; (x, y))$ is non-decreasing in $t$. As the coverage loss $L(C_t)=\P(Y\notin C_t(X))$ (and its empirical version) is instead non-increasing in $t$, the efficiency loss and the coverage loss always comes as a trade-off.

\subsection{Conformal prediction}
\label{section:conformal}
Conformal prediction~\citep{vovk2005algorithmic, lei2014distribution} is a powerful technique for learning prediction sets with coverage guarantees. The core of conformal prediction is its \emph{conformalization} step, which turns any base prediction function (or training algorithm) into a prediction set.

We here briefly review conformal prediction using the vanilla (split) conformal regression method of~\citep{lei2018distribution}, and refer the readers to~\citep{angelopoulos2021gentle} for more examples. Given any \emph{base predictor} $f:\mc{X}\to\R$ (potentially learned on a training dataset $D_{\train}$), conformal prediction outputs a prediction interval 
\begin{align}
  \label{equation:vanilla-conformal}
  C_{\what{t}}(x) \defeq \brac{ f(x) - \what{t}, f(x) + \what{t} },
\end{align}
where $\what{t}\in\R_{\ge 0}$ is chosen as the $(1-\alpha)$-quantile\footnote{Technically~\eqref{equation:vanilla-t} requires the $\ceil{(1-\alpha)(n_{\recal}+1)}$-th largest element to guarantee valid coverage~\citep{vovk2005algorithmic}; here we choose the close $\ceil{(1-\alpha)n_{\recal}}$-th largest to allow the following insight.} of $|y - f(x)|$ on a calibration dataset $D_{\cal}$ with size $n_{\cal}\defeq |D_{\cal}|$ using the following conformalization step:
\begin{align}
  \label{equation:vanilla-t}
  \what{t} = \ceil{(1-\alpha)n_{\cal}}\textrm{-th largest of}~\set{|y_i - f(x_i)|}_{i=1}^{n_{\cal}}.
\end{align}
The main guarantee for the learned interval $C_{\what{t}}$ is that it achieves a $(1-\alpha)$ coverage guarantee of the form $\P_{D_{\cal}, (X, Y)}(Y \in C_{\what{t}}(X)) \ge 1-\alpha$~\citep[Theorem 2.2]{lei2018distribution}. The proof relies on the exchangeability between the scores $\set{|y_i - f(x_i)|}_{i=1}^{n_{\cal}}$ and $|Y - f(X)|$, which allows this guarantee to hold in a distribution-free fashion (i.e. for any data distribution).

\paragraph{Conformal prediction as a constrained ERM with one parameter}
We start by a simple re-interpretation that the conformalization step~\eqref{equation:vanilla-t} is equivalent to solving a constrained empirical risk minimization (ERM) problem with a single learnable parameter $t$ (cf. Appendix~\ref{appendix:proof-prop1} for the proof).
\begin{proposition}[Conformal regression as a constrained ERM with one learnable parameter]
  \label{proposition:single-conformal}
  The parameter $\what{t}\in\R$ defined in~\eqref{equation:vanilla-t} is the solution to the following constrained ERM problem
  \begin{equation}
    \label{problem:single-conformal}
    \begin{aligned}
      \minimize_{t\ge 0} & ~~ \what{L}_{\eff}(C_t) \defeq \frac{1}{n_{\cal}} \sum_{i\in D_{\cal}} \ell_{\eff}(C_t; (x_i, y_i)) = 2t \\
      \subjectto & ~~\what{L}_{\coverage}(C_t) \defeq \frac{1}{n_{\cal}} \sum_{i\in D_{\cal}} \indic{ y_i \notin C_t(x_i)} \le \alpha.
    \end{aligned}
  \end{equation}
  Above, $\ell_{\eff}(C; (x, y)) = {\rm length}(C(x))$ is the length of the interval $C(x)$.
\end{proposition}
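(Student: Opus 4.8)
The plan is to reduce the constrained ERM~\eqref{problem:single-conformal} to a one-dimensional comparison of order statistics of the conformal scores, by noting that both the objective and the constraint depend on $t$ only through elementary monotone functions. Write $V_i \defeq |y_i - f(x_i)|$ for the scores on $D_\cal$. First I would record the two reductions. Since $C_t(x) = [f(x)-t,\, f(x)+t]$, we have $\length(C_t(x_i)) = 2t$ for every $i$, so indeed $\what{L}_\eff(C_t) = 2t$; and $y_i \notin C_t(x_i) \iff V_i > t$, so the coverage constraint becomes $\what{L}_\coverage(C_t) = \frac{1}{n_\cal}\#\set{i : V_i > t} \le \alpha$, which is equivalent to $\#\set{i:V_i>t}\le \alpha n_\cal$, i.e. $\#\set{i : V_i > t} \le \lfloor \alpha n_\cal \rfloor$ since the count is integer-valued.

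Next I would analyze the feasible set $\mc{F} \defeq \set{t \ge 0 : \#\set{i : V_i > t} \le \alpha n_\cal}$. The map $t \mapsto \#\set{i : V_i > t}$ is non-increasing and right-continuous --- it is a step function whose only jumps occur at the distinct values among $\set{V_i}$, and it already takes its post-jump value at each such point --- so its sublevel set is closed and upward-closed, hence $\mc F = [t^\star, \infty)$ for some $t^\star$, provided $\mc F \neq \emptyset$; and $\mc F$ is nonempty because $t = \max_i V_i$ gives miscoverage $0 \le \alpha$. (Also $t^\star \ge 0$, since every $V_i \ge 0$.) As the objective $2t$ is strictly increasing in $t$, the ERM~\eqref{problem:single-conformal} has the unique minimizer $t^\star = \min \mc F$.

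It then remains to identify $t^\star$ with the conformalization threshold $\what t$ of~\eqref{equation:vanilla-t}. By right-continuity, $t^\star$ is the smallest value among $\set{V_i}$ at which the count $\#\set{i : V_i > t}$ has dropped to at most $\lfloor \alpha n_\cal\rfloor$; reading this off the sorted list of scores shows $t^\star$ is exactly the empirical $(1-\alpha)$-quantile of $\set{V_i}_{i=1}^{n_\cal}$, i.e. $\what t$. The indexing conversion here uses the identity $n_\cal - \lceil (1-\alpha) n_\cal\rceil = \lfloor \alpha n_\cal\rfloor$ to pass between counting ``from the top'' and ``from the bottom'' of the sorted scores.

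The argument is elementary; the one place I would be careful --- and what I expect to be the main (mild) obstacle --- is precisely this last bookkeeping step: matching $t^\star$ to the exact order statistic in~\eqref{equation:vanilla-t} through the ceiling/floor conversions, and checking that ties among the $V_i$ (several scores equal to $t^\star$) and the boundary case where $\alpha n_\cal$ is an integer do not move the minimizer. Both are handled uniformly by the monotonicity and right-continuity of $t \mapsto \what{L}_\coverage(C_t)$ established above: $t^\star$ is simply the first point in the sorted score sequence at which this function falls to $\le \alpha$.
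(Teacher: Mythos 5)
Your proposal is correct and follows essentially the same route as the paper's proof in Appendix~\ref{appendix:proof-prop1}: reduce the objective to $2t$, rewrite the constraint as a count of scores $|y_i-f(x_i)|$ exceeding $t$, and identify the minimizer with the empirical quantile. The only difference is that you spell out the final step (feasible set is a closed half-line by monotonicity and right-continuity, plus the $n_{\cal}-\ceil{(1-\alpha)n_{\cal}}=\lfloor\alpha n_{\cal}\rfloor$ index conversion) that the paper compresses into ``by definition of quantiles.''
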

Though simple, this re-interpretation suggests a limitation to the conformalization step~\eqref{equation:vanilla-t} as well as its analogue in other existing conformal methods: It only learns a single parameter $t$, and thus cannot further optimize the efficiency due to the coverage-efficiency trade-off (cf. Figure~\ref{figure:fig1}). However, the form of the constrained ERM problem~\eqref{problem:single-conformal} suggests that it can be readily extended to more general function classes with more than one learnable parameters, which is the focus of this work.

\section{Conformal prediction with general function classes}
\label{section:theory}

\subsection{Algorithm}
\label{section:multi-conformal-algorithm}

Our algorithm, Conformal Prediction with General Function Classes (\cpgen; full description in Algorithm~\ref{algorithm:multi-conformal}), is an extension of the constrained ERM problem~\eqref{problem:single-conformal} into the case of general function classes with multiple learnable parameters. \cpgen~takes in a function class of prediction sets
\begin{align}
  \label{equation:function-class}
  \mc{C} \defeq \set{C_{\theta, t}(x): \theta\in \Theta, t \in\mc{T}\subset \R},
\end{align}
where (as mentioned) we assume that $\set{C_{\theta, t}}_{t\in\mc{T}}$ is a nested set for each $\theta\in\Theta$. The parameter set $\Theta$ as well as the form of $C_{\theta, t}$ in~\eqref{equation:function-class} can be arbitrary, depending on applications and the available base predictors at hand. Given $\mc{C}$, our algorithm then solves the constrained ERM problem~\eqref{problem:multi-conformal-alg} of finding the smallest interval among $\mc{C}$ subject to valid coverage on dataset $D_{\cal}$.

Compared with vanilla conformal prediction, Algorithm~\ref{algorithm:multi-conformal} allows more general tasks with an arbitrary function class and efficiency loss; for example, this encompasses several recent algorithms such as finite hyperparameter selection and linear aggregation~\citep{yang2021finite,chen2021learning}. We remark that~\eqref{problem:multi-conformal-alg} includes an additional relaxation parameter $\eps_0\ge 0$ for the coverage constraint. This is for analysis (for Proposition~\ref{proposition:main-gen}(b) \&~\ref{proposition:reconformalize}(b)) only; our implementation uses $\eps_0=0$.

\begin{algorithm}[t]
  \caption{Conformal Prediction with General Function Class (\cpgen)}
  \label{algorithm:multi-conformal}
  \begin{algorithmic}[1]
    \REQUIRE Class of prediction sets $\mc{C}=\set{C_{\theta, t}}_{\theta\in\Theta, t\in\mc{T}}$; target miscoverage level $\alpha\in(0,1)$; $\eps_0\ge 0$.\\
    \hspace{1.32em} Efficiency loss $\ell_{\eff}$; Calibration dataset $D_{\cal}$ with size $n_{\cal}$.
    \STATE Solve the following constrained ERM problem on dataset $D_{\cal}$ (with relaxation parameter $\eps_0$):
    \begin{equation}
      \label{problem:multi-conformal-alg}
      \begin{aligned}
        (\what{\theta}, \what{t}) \setto
        \argmin_{\theta\in\Theta, t\in\mc{T}} & ~~ \what{L}_{\eff}(C_{\theta, t}) \defeq \frac{1}{n_{\cal}} \sum_{i \in D_{\cal}} \ell_{\eff}(C_{\theta, t}(x_i), y_i) \\
        \subjectto & ~~ \what{L}_{\coverage}(C_{\theta, t}) \defeq \frac{1}{n_{\cal}} \sum_{i\in D_{\cal}} \indic{y_i \notin C_{\theta, t}(x_i)} \le \alpha + \eps_0.
      \end{aligned}
    \end{equation}
    \ENSURE Prediction set $C_{\what{\theta}, \what{t}}$.
  \end{algorithmic}
\end{algorithm}

\subsection{Theory}
\label{section:multi-conformal-theory}

An important theoretical question about \cpgen~is whether it achieves coverage and efficiency guarantees on the population (test data). This section showcases that, by standard generalization arguments, \cpgen~achieves approximate validity and near-optimal efficiency whenever function class is low-capacity in a certain sense. We remark that our experiments use the modified algorithm \cpgenrecal~(Section~\ref{section:multi-conformal-reconformalize}) which involves a reconformalization step. Here we focus on \cpgen~as we believe its theory could be more informative.

Let $L_{\set{\eff, \coverage}}(C_{\theta, t})\defeq \E[\ell_{\set{\eff, \coverage}}(C_{\theta, t}; (X, Y))]$ denote the population coverage and efficiency lossesfor any $(\theta, t)$. We define the following uniform concentration quantities:
\begin{align}
  & \eps_{\eff} \defeq \sup\nolimits_{\theta\in\Theta, t\in\mc{T}} \abs{ \what{L}_{\eff}(C_{\theta, t}) - L_{\eff}(C_{\theta, t}) }, \label{equation:eps-eff} \\
  & \eps_{\coverage} \defeq \sup\nolimits_{\theta\in\Theta, t\in\mc{T}} \abs{ \what{L}_{\coverage}(C_{\theta, t}) - L_{\coverage}(C_{\theta, t}) }. \label{equation:eps-coverage}
\end{align}
The following proposition connects the generalization of \cpgen~to the above uniform concentration quantities by standard arguments (see Appendix~\ref{appendix:proof-main-gen} for the proof. We remark that the proof relies on $D_{\cal}$ being i.i.d., which is slightly stronger than exchangeability assumption commonly assumed in the conformal prediction literature.)
\begin{proposition}[Generalization of \cpgen]
  \label{proposition:main-gen}
  The prediction set $C_{\what{\theta}, \what{t}}$ learned by Algorithm~\ref{algorithm:multi-conformal} satisfies
  \begin{enumerate}[wide,label=(\alph*),topsep=0pt]
  \item (Approximately valid population coverage) We have
    \begin{align*}
      L_{\coverage}(C_{\what{\theta}, \what{t}}) \le \alpha + \eps_0 + \eps_{\coverage},
    \end{align*}
    i.e. the population coverage of $C_{\what{\theta}, \what{t}}$ is at least $1-\alpha-(\eps_0+\eps_{\coverage})$.
  \item (Near-optimal efficiency) Suppose $\eps_0\ge \eps_{\coverage}$, then we further have
    \begin{align*}
      L_{\eff}(C_{\what{\theta}, \what{t}}) \le \mathop{\inf_{(\theta, t)\in\Theta\times \mc{T}}}_{L_{\coverage}(C_{\theta, t})\le \alpha} L_{\eff}(C_{\theta, t}) + 2\eps_{\eff},
    \end{align*}
    i.e.  $C_{\what{\theta}, \what{t}}$ achieves $2\eps_{\eff}$-near-optimal efficiency against any prediction set within $\mc{C}$ with at least $(1-\alpha)$ population coverage.
  \end{enumerate}
\end{proposition}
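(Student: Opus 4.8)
The plan is to prove both parts purely from (i) the defining optimality and feasibility properties of the empirical solution $(\what\theta,\what t)$ in the constrained ERM~\eqref{problem:multi-conformal-alg} and (ii) the two uniform deviation bounds $\eps_{\coverage}$ and $\eps_{\eff}$ introduced in~\eqref{equation:eps-coverage} and~\eqref{equation:eps-eff}. No concentration inequality needs to be invoked directly here, since $\eps_{\coverage},\eps_{\eff}$ are treated as given quantities; the work is entirely in chaining empirical and population quantities through these bounds.

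For part (a), I would start from the fact that $(\what\theta,\what t)$ is feasible for~\eqref{problem:multi-conformal-alg}, so $\what{L}_{\coverage}(C_{\what\theta,\what t})\le \alpha+\eps_0$. By the definition of $\eps_{\coverage}$ we have $\abs{L_{\coverage}(C_{\what\theta,\what t})-\what{L}_{\coverage}(C_{\what\theta,\what t})}\le \eps_{\coverage}$, and combining the two displays gives $L_{\coverage}(C_{\what\theta,\what t})\le \alpha+\eps_0+\eps_{\coverage}$, which is exactly the claim (equivalently, coverage at least $1-\alpha-(\eps_0+\eps_{\coverage})$).

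For part (b), the key step is a \emph{feasibility transfer} in the direction from population to sample. Fix any $(\theta,t)\in\Theta\times\mc{T}$ with $L_{\coverage}(C_{\theta,t})\le \alpha$. Then $\what{L}_{\coverage}(C_{\theta,t})\le L_{\coverage}(C_{\theta,t})+\eps_{\coverage}\le \alpha+\eps_{\coverage}\le \alpha+\eps_0$, where the last inequality uses the hypothesis $\eps_0\ge\eps_{\coverage}$; hence $(\theta,t)$ is feasible for~\eqref{problem:multi-conformal-alg}. Optimality of $(\what\theta,\what t)$ then gives $\what{L}_{\eff}(C_{\what\theta,\what t})\le \what{L}_{\eff}(C_{\theta,t})$, and applying the $\eps_{\eff}$ bound on both sides yields $L_{\eff}(C_{\what\theta,\what t})\le \what{L}_{\eff}(C_{\what\theta,\what t})+\eps_{\eff}\le \what{L}_{\eff}(C_{\theta,t})+\eps_{\eff}\le L_{\eff}(C_{\theta,t})+2\eps_{\eff}$. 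Taking the infimum over all population-feasible $(\theta,t)$ gives the stated bound; if the infimum is not attained I would run the same argument with a $\tau$-approximate minimizer and let $\tau\downarrow 0$, and if the population-feasible set is empty the right-hand side is $+\infty$ and there is nothing to prove.

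The argument is essentially routine, so the "main obstacle" is minor: the one genuinely load-bearing point is the \emph{direction} of the feasibility transfer in part (b) — we need every population-feasible point to remain empirically feasible, which is precisely where the relaxation $\eps_0\ge\eps_{\coverage}$ enters and explains why part (b) requires this extra hypothesis while part (a) does not. The remaining care is bookkeeping: ensuring the empirical $\argmin$ in~\eqref{problem:multi-conformal-alg} is well-defined (or, if not, passing to an approximate minimizer as above) and handling non-attainment of the population infimum.
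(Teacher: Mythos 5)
Your proposal is correct and follows essentially the same argument as the paper's proof: part (a) combines empirical feasibility with the uniform deviation $\eps_{\coverage}$, and part (b) transfers population feasibility to empirical feasibility via $\eps_0\ge\eps_{\coverage}$, invokes optimality of $(\what{\theta},\what{t})$, and pays $2\eps_{\eff}$ to move back to population efficiency. Your extra remarks on non-attainment of the infimum and the empty feasible set are harmless additions the paper omits.
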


\paragraph{Examples of good generalization}
Proposition~\ref{proposition:main-gen} shows that \cpgen~acheives approximate coverage and near-optimal efficiency if the concentration terms $\eps_\eff$ and $\eff_\coverage$ are small. In Appendix~\ref{appendix:examples}, we bound these on two example function classes: \emph{Finite Class} (Proposition~\ref{proposition:finite-class}) and \emph{VC/Rademacher Class} (Proposition~\ref{proposition:vc-class}). Both classes admit bounds of the form $\{\eps_\eff,\eff_\coverage\}\le \sqrt{{\rm Comp}(\mc{C})/n_{\cal}}$ with high probability via standard concentration arguments, where ${\rm Comp}(\mc{C})$ is a certain complexity measure of $\mc{C}$. Combined with Proposition~\ref{proposition:main-gen}, our \cpgen~algorithm with these classes achieve an $1-\alpha-\sqrt{{\rm Comp}(\mc{C})/n_{\cal}}$ approximate coverage guarantee and $\sqrt{{\rm Comp}(\mc{C})/n_{\cal}}$ near-optimal efficiency guarantee. In particular, our Proposition~\ref{proposition:finite-class} recovers the coverage guarantee for the finite-class selection algorithm of~\citep[Theorem 1]{yang2021finite} though our efficiency guarantee is more general.

We remark that both examples above contain important applications. The finite class contains e.g. optimizing over a $K$-dimensional hyperparameter to use via grid search, with e.g. $(1/\delta)^K$ confidence sets and thus ${\rm Comp}(\mc{C}) = O(\log ((1/\delta)^K)) = O(K\log(1/\delta))$. The VC/Rademacher class contains the important special case of linear classes with $K$ base predictors (we defer the formal statement and proof to Appendix~\ref{appendix:linear-class}). Also, these examples are not necessarily exhaustive. Our take-away is rather that we may expect \cpgen~to generalize well (and thus achieves good coverage and efficiency) more broadly in practice, for instance whenever it learns $K\ll n_{\cal}$ parameters.

\begin{algorithm}[t]
   \caption{Conformal Prediction with General Fn. Class and Recalibration (\cpgenrecal)}
   \label{algorithm:multi-conformal-reconformalize}
   \begin{algorithmic}[1]
     \REQUIRE Class of prediction sets $\mc{C}=\set{C_{\theta, t}}_{\theta\in\Theta, t\in\mc{T}}$; target miscoverage level $\alpha\in(0,1)$; $\eps_0\ge 0$. \\
     \hspace{1.32em} Efficiency loss $\ell_{\eff}$;
     Calibration datasets $D_{\cal}$, $D_{\recal}$ with size $n_{\cal}$, $n_{\recal}$.
     \STATE Run Algorithm~\ref{algorithm:multi-conformal} on dataset $D_{\cal}$ (with relaxation parameter $\eps_0$) to obtain $(\what{\theta}, \what{t})$. \label{line:1}
     \STATE Keep $\what{\theta}$, and reconformalize $t\in\mc{T}$ on the recalibration dataset $D_{\recal}$:
     \begin{align*}
       \what{t}_{\recal} \setto \inf\set{t\in\mc{T}: y_i \in C_{\what{\theta}, t}(x_i)~\textrm{for at least}~\ceil{(1-\alpha)(n_{\recal}+1)}~\textrm{examples}~(x_i, y_i)\in D_{\recal}}.
     \end{align*}
     \ENSURE Prediction set $C_{\what{\theta}, \what{t}_{\recal}}$.
   \end{algorithmic}
 \end{algorithm}

\subsection{Algorithm with valid coverage via reconformalization}
\label{section:multi-conformal-reconformalize}

Although \cpgen~enjoys theoretical bounds on the coverage and efficiency, a notable drawback is that it does not guarantee \emph{exactly valid} (at least) $1-\alpha$ coverage like usual conformal prediction, and its approximate coverage bound depends on the uniform concentration quantity $\eps_{\coverage}$ that is not computable from the observed data without structural assumptions on the function class $\mc{C}$. %

To remedy this, we incorporate a simple reconformalization technique on another recalibration dataset $D_{\recal}$ (e.g. a further data split), which guarantees valid finite-sample coverage by exchangeability. We call this algorithm \cpgenrecal~and provide the full description in Algorithm~\ref{algorithm:multi-conformal-reconformalize}.

We remark that this reconformalization technique for obtaining guaranteed $1-\alpha$ coverage is widely used in the conformal prediction literature, e.g.~\citep{angelopoulos2020uncertainty}. However, to the best of our knowledge, there is no known analysis for our \cpgenrecal~algorithm for general function classes, for which we provide a result below (formal statement and proof can be found in Proposition~\ref{proposition:reconformalize} \& Appendix~\ref{appendix:theory-recal}). The proof of the coverage bound is standard as in the conformal prediction literature, while the proof of the efficiency bound builds upon the result for \cpgen~(Proposition~\ref{proposition:main-gen}(b)) and handles additional concentration terms from the reconformalization step.

 \begin{proposition}[Coverage and efficiency guarantee for \cpgenrecal; Informal version]
   The prediction set $C_{\what{\theta}, \what{t}_{\recal}}$ learned by Algorithm~\ref{algorithm:multi-conformal-reconformalize} achieves $(1-\alpha)$ finite-sample coverage: $\P_{D_{\recal}, (X, Y)}(Y \in C_{\what{\theta}, \what{t}_{\recal}}) \ge 1-\alpha$. Further, it achieves $O(\eps_{\eff} + \eps_{\coverage} + 1/\sqrt{n_{\recal}})$ near-optimal efficiency under additional regularity assumptions.
 \end{proposition}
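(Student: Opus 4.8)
The plan is to establish the two halves of the statement separately, since they use quite different tools. For the coverage bound, the key observation is that once $\what\theta$ is fixed (using only $D_\cal$), the subsequent step is \emph{exactly} a vanilla split-conformal conformalization along the single parameter $t$ on the fresh dataset $D_\recal$, applied to the nested family $\set{C_{\what\theta, t}}_{t\in\mc{T}}$. Conditioning on $D_\cal$ makes $\what\theta$ deterministic, so $D_\recal$ and the test point $(X,Y)$ are still i.i.d.\ (hence exchangeable) given $D_\cal$; the scores $S_i \defeq \inf\set{t: y_i \in C_{\what\theta, t}(x_i)}$ for $i\in D_\recal$ together with the test score $S\defeq\inf\set{t: Y\in C_{\what\theta,t}(X)}$ are exchangeable, and $\what t_\recal$ is (by monotonicity of nested sets) the $\ceil{(1-\alpha)(n_\recal+1)}$-th smallest of the $S_i$. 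The standard quantile-lemma argument then gives $\P(S\le \what t_\recal\mid D_\cal)\ge 1-\alpha$, i.e.\ $\P(Y\in C_{\what\theta,\what t_\recal}(X)\mid D_\cal)\ge 1-\alpha$; taking expectations over $D_\cal$ yields the unconditional bound. This part is routine and mirrors, e.g., the analysis cited from \citet{angelopoulos2020uncertainty}.

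For the efficiency bound, the goal is to show $L_\eff(C_{\what\theta,\what t_\recal})$ is not much larger than $\inf_{(\theta,t): L_\coverage(C_{\theta,t})\le\alpha} L_\eff(C_{\theta,t})$. I would chain three comparisons. First, by Proposition~\ref{proposition:main-gen}(b) applied to the $D_\cal$-stage (with $\eps_0\ge\eps_\coverage$), the pair $(\what\theta,\what t)$ already satisfies $L_\eff(C_{\what\theta,\what t})\le \inf_{L_\coverage\le\alpha} L_\eff(C_{\theta,t}) + 2\eps_\eff$. Second, I must control the gap between $\what t_\recal$ and $\what t$: both are (approximately) the parameter where the coverage loss along the slice $t\mapsto L_\coverage(C_{\what\theta,t})$ crosses $\alpha$ — $\what t$ makes the \emph{$D_\cal$-empirical} coverage loss $\le\alpha+\eps_0$ while $\what t_\recal$ makes the \emph{$D_\recal$-empirical} coverage loss $\le\alpha$ (up to the $1/(n_\recal+1)$ rounding). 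Both empirical coverage losses are within $\eps_\coverage$ (resp.\ $O(1/\sqrt{n_\recal})$ plus a one-dimensional uniform-deviation term) of the population coverage loss $L_\coverage(C_{\what\theta,\cdot})$. Under the regularity assumption — which I expect to be a lower bound on the ``density'' / slope of $t\mapsto L_\coverage(C_{\what\theta,t})$ near its $\alpha$-level crossing, analogous to the $\cdflower,\cdfradius$ constants hinted at by the macros in the preamble — a coverage-loss discrepancy of order $\eps_\coverage + 1/\sqrt{n_\recal}$ translates into a parameter discrepancy $|\what t_\recal - \what t| = O(\eps_\coverage + 1/\sqrt{n_\recal})$. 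Third, I convert this parameter gap into an efficiency gap: using nestedness (so $\ell_\eff(C_{\what\theta,t})$ is monotone in $t$) together with a Lipschitz-in-$t$ assumption on $t\mapsto L_\eff(C_{\what\theta,t})$ (or a uniform bound on how much the efficiency loss can grow over an interval of length $O(\eps_\coverage+1/\sqrt{n_\recal})$), I get $L_\eff(C_{\what\theta,\what t_\recal}) \le L_\eff(C_{\what\theta,\what t}) + O(\eps_\coverage + 1/\sqrt{n_\recal})$. Adding the $1/\sqrt{n_\recal}$ fluctuation of the one-dimensional reconformalization and combining the three steps gives the claimed $O(\eps_\eff + \eps_\coverage + 1/\sqrt{n_\recal})$ bound.

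The main obstacle is the middle step: translating a small discrepancy in coverage loss into a small discrepancy in the parameter $t$, and then into efficiency. This is exactly where the ``additional regularity assumptions'' must do their work, because in general $t\mapsto L_\coverage(C_{\what\theta,t})$ could be flat near level $\alpha$ (so that $\what t$ and $\what t_\recal$ are far apart even though both nearly achieve coverage $1-\alpha$) or could jump (so a tiny change in $t$ causes a large efficiency change). I would therefore state explicitly: (i) a local anti-concentration / minimum-slope condition on the population coverage-loss slice $t\mapsto L_\coverage(C_{\what\theta,t})$ in a neighborhood of the $\alpha$-crossing (uniformly over $\theta$, or at least for $\theta=\what\theta$ on the relevant event), and (ii) a Lipschitz bound on $t\mapsto L_\eff(C_{\what\theta,t})$. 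One subtlety to handle carefully is that $\what\theta$ is data-dependent, so the concentration term for the one-dimensional reconformalization over $t$ needs to hold uniformly over $\theta\in\Theta$ — but since the family $\set{C_{\theta,t}}$ is nested in $t$, the relevant one-dimensional classes $\set{\indic{y\notin C_{\theta,t}(x)}}_{t\in\mc{T}}$ have VC dimension $1$ for each fixed $\theta$, and a union/uniform bound over $\Theta$ reduces to the same $\eps_\coverage$-type quantity already controlled, so this does not introduce anything genuinely new beyond what Proposition~\ref{proposition:main-gen} already assumes. With these assumptions in place, the three-step chaining is mechanical.
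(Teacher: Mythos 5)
Your proposal is correct and follows essentially the same route as the paper's proof of Proposition~\ref{proposition:reconformalize}: exchangeability of the scores given $\what{\theta}$ for the coverage half, and for the efficiency half the same three-step chain of Proposition~\ref{proposition:main-gen}(b), a density-lower-bound (Assumption~\ref{assumption:cdf}) argument placing both $\what{t}$ and $\what{t}_{\recal}$ within $O(\eps_{\coverage}+1/n_{\cal})/\cdflower$ and $O(\sqrt{\log(1/\delta)/n_{\recal}})/\cdflower$ of the population quantile $t_{\what{\theta},1-\alpha}$, and $\Lipt$-Lipschitzness of the efficiency loss in $t$. The only cosmetic difference is in the recalibration concentration step: you propose a bound uniform over $\Theta$, whereas the paper simply conditions on $\what{\theta}$ (which is independent of $D_{\recal}$) and applies the DKW inequality for that fixed $\theta$, avoiding any extra uniformity.
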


\section{Differentiable optimization}
\label{section:optim}

Our (meta) algorithms \cpgen~and \cpgenrecal~involve solving the constrained ERM problem~\eqref{problem:multi-conformal-alg}. One feasible case is when $\Theta$ is finite and small, in which we enumerate all possible $\theta\in\Theta$ and find the optimal $\what{t}$ for each $\theta$ efficiently using quantile computation. However, this optimization is significantly more challenging when the underlying parameter set $\Theta$ is continuous and we wish to jointly optimize over $(\theta, t)$: The coverage loss $\widehat{L}_{\coverage}(C_{\theta, t})$ is \emph{non-differentiable} and its ``gradient'' is zero almost everywhere as it uses the zero-one loss. This makes the coverage constraint challenging to deal with and not amenable to any gradient-based algorithm.

To address this non-differentiability, we develop a gradient-based practical implementation by approximating the coverage constraint. We first rewrite each individual coverage loss into the form
\begin{align*}
  \indic{y\notin C_{\theta, t}(x)} = \indic{s_{\theta, t}(x, y) < 0} = \ell_{\zeroone}(s_{\theta, t}(x,y)).
\end{align*}
where $\ell_{\zeroone}(z)\defeq \indic{z < 0}$ is the zero-one loss. (Such a rewriting is possible in most cases by taking $s_{\theta, t}$ as a suitable ``score-like'' function; see Appendix~\ref{appendix:exp-details} for instantiations in our experiments.) Then, inspired by the theory of surrogate losses~\citep{bartlett2006convexity}, we approximate $\ell_{\zeroone}(z)$ by the hinge loss $\ell_{\hinge}(z) = \brac{1-z}_+$ which is (almost everywhere) differentiable with a non-trivial gradient. We find the hinge loss to perform better empirically than alternatives such as the logistic loss.

To deal with the (modified) constraint, we turn it into an exact penalty term with penalty parameter $\lambda\ge 0$, and use a standard primal-dual formulation~\citep{bertsekas1997nonlinear} to obtain an unconstrained min-max optimization problem on the Lagrangian:
\begin{align}
  \label{problem:practical}
  \min_{\theta, t} \max_{\lambda\ge 0} \what{L}_{\eff}(C_{\theta, t}) + \lambda \brac{ \what{L}_{\hinge}(C_{\theta, t}) - \alpha}_+,
\end{align}
where $\what{L}_{\hinge}(C_{\theta, t})\defeq \frac{1}{n_{\cal}}\sum_{i=1}^{n_\cal} \ell_{\hinge}(s_{\theta, t}(x_i, y_i))$ is the empirical hinge loss on the calibration dataset $D_{\cal}$. Our final practical implementation of~\eqref{problem:multi-conformal-alg} solves the problem~\eqref{problem:practical} by Stochastic Gradient Descent-Ascent (with respect to $(\theta, t)$ and $\lambda$) to yield an approximate solution $(\what{\theta}, \what{t})$. We remark that in our experiments where we use the reconformalized version~\cpgenrecal, we only keep the $\what{\theta}$ obtained from~\eqref{problem:practical} and perform additional reconformalization to compute $\what{t}_{\recal}$ to guarantee coverage.

We also emphasize that the approximation in~\eqref{problem:practical} makes the problem differentiable at the cost of deviating from the true constrained ERM problem~\eqref{problem:multi-conformal-alg} and thus potentially may sacrifice in terms of the efficiency. However, our experiments in Section~\ref{section:experiment} show that such an implementation can still improve the efficiency over existing approaches in practice, despite the approximation.

\section{Experiments}
\label{section:experiment}

We empirically test our \cpgenrecal~algorithm (using the practical implementation~\eqref{problem:practical}) on three representative real-data tasks. The concrete construction of $\set{C_{\theta, t}}$ will be described within each application. Throughout this section we choose $1-\alpha=90\%$ as the nominal coverage level, and use the \cpgenrecal~algorithm to guarantee coverage in expectation. We provide ablations with $\alpha\in\set{80\%, 95\%}$ in Appendix~\ref{appendix:cqf-ablations} and~\ref{appendix:multi-output-ablations} and the \cpgen~algorithm in Appendix~\ref{appendix:cpgen}. Several additional ablations and analyses can be found in Appendix~\ref{appendix:additional-experiments}.

\subsection{Improved prediction intervals via conformal quantile finetuning}
\label{section:cqf}

\begin{table}[t] 
\centering 
\small 
\caption{\small {\bf Results for conformal quantile finetuning} on real-data regression tasks at level $1-\alpha=90\%$. For each method we report the (test) coverage, length, and pinball loss of the corresponding base quantile predictor. All results are averaged over 8 random seeds.} 
\vspace{-1em} 
\centerline{ 
\label{table:conformal-finetuning} 
\begin{tabular}{lcccccc} 
\toprule 
 & \multicolumn{3}{c}{\cqr}  & \multicolumn{3}{c}{{\tt QR} + \cpgenrecal~(ours)} \\ 
\cmidrule(r){2-4}  \cmidrule(r){5-7} 
Dataset & Coverage(\%) & Length & $L_{\rm pinball}^{\rm test}$ & Coverage(\%) & Length & $L_{\rm pinball}^{\rm test}$ \\ 
\midrule 
  MEPS\_19 &  $89.98$  &  $1.167$  &  $0.112$  &  $90.09$  &  $\mathbf{0.890}$ &  $0.131$  \\ 
  MEPS\_20 &  $89.72$  &  $1.165$  &  $0.117$  &  $89.99$  &  $\mathbf{0.830}$ &  $0.141$  \\ 
  MEPS\_21 &  $89.81$  &  $1.145$  &  $0.107$  &  $90.22$  &  $\mathbf{0.962}$ &  $0.129$  \\ 
  Facebook\_1 &  $90.12$  &  $0.555$  &  $0.052$  &  $90.34$  &  $\mathbf{0.384}$ &  $0.090$  \\ 
  Facebook\_2 &  $90.13$  &  $0.491$  &  $0.044$  &  $90.02$  &  $\mathbf{0.364}$ &  $0.092$  \\ 
  kin8nm &  $90.03$  &  $1.214$  &  $0.076$  &  $89.31$  &  $\mathbf{1.173}$ &  $0.078$  \\ 
  naval &  $89.70$  &  $3.095$  &  $0.164$  &  $89.71$  &  $\mathbf{3.077}$ &  $0.166$  \\ 
  bio &  $90.26$  &  $2.271$  &  $0.130$  &  $90.20$  &  $\mathbf{2.164}$ &  $0.148$  \\ 
  blog\_data &  $90.19$  &  $0.605$  &  $0.058$  &  $90.01$  &  $\mathbf{0.496}$ &  $0.107$  \\ 
 \midrule 
  Nominal ($1-\alpha$) & $90.00$ & - & - & $90.00$ & - & - \\ 
\bottomrule 
\end{tabular} 
} 
\vspace{-1em} 
\end{table}

\paragraph{Setup}
We consider regression tasks in which we use quantile regression (pinball loss) to train a base quantile predictor $\what{F}(x) = [\what{f}_{\low}(x), \what{f}_{\up}(x)] = \what{\theta}_0^\top \what{\Phi}(x)$ on $D_{\train}$ (with learning rate decay by monitoring validation loss on $D_{\cal}$). Here $\what{\Phi}:\mc{X}\to \R^{d_h}$ is the learned representation function\red{,} $\what{\theta}_0\in\R^{d_h\times 2}$ is the last linear layer ($d_h$ denotes the last hidden dimension), and $\what{f}_{\low},\what{f}_{\up}$ are the learned \{lower, upper\} quantile functions (see Appendix~\ref{appendix:cqf-details} for more details on the training procedure). Given $\what{F}$, we learn a baseline prediction interval of the form $[\what{f}_{\low}(x) - t, \what{f}_{\up}(x) + t]$ on $D_{\recal}$ via Conformalized Quantile Regression (\cqr)~\citep{romano2019conformalized}.

We then attempt to improve the length over \cqr~by \emph{conformal quantile finetuning}: Fix the representation function $\what{\Phi}$ and finetune the linear layer $\theta$ using our \cpgenrecal~algorithm, so that $C_{\theta, t}(x) = [\theta_{\low}^\top \what{\Phi}(x) - t, \theta_{\up}^\top\what{\Phi}(x)+t]$ (where $\theta=[\theta_{\low}, \theta_{\up}]$). We learn a new $\what{\theta}$ on $D_{\cal}$ via~\eqref{problem:practical} (where $\ell_{\eff}$ is chosen as the length), and then compute $\what{t}_{\recal}$ on $D_{\recal}$ as in Algorithm~\ref{algorithm:multi-conformal-reconformalize}.

We perform the above on 9 real-world regression datasets with a 3-layer MLP with width $d_h=64$, similar as~\citep{romano2019conformalized, feldman2021improving}. Additional details about the setup can be found in Appendix~\ref{appendix:cqf-details}. We also test various tweaks of the \cqr~baseline (results provided in Appendix~\ref{appendix:alternative-tweaks}).

\paragraph{Results}
Table~\ref{table:conformal-finetuning} compares the (test) coverage and length between \cqr~and the finetuned linear layer via our \cpgenrecal. While both \cqr~and \cpgenrecal~achieves valid 90\% coverage, \cpgenrecal~can systematically improve the length over \cqr~on all tasks. Table~\ref{table:conformal-finetuning} also reports the pinball loss for both the base $\what{\theta}_0\what{\Phi}(x)$ as well as the fine-tuned $\what{\theta}^\top\what{\Phi}(x)$ on the test set $D_{\test}$. Intriguingly, our conformal finetuning made the pinball loss \emph{worse} while managing to improve the length. This suggests the unique advantage of our constrained ERM objective, as it rules out the simple explanation that the length improvement is just because of a lower test loss. We remark that while \cpgenrecal~improves the length over \cqr, it comes at a cost in terms of worse conditional coverage (analysis presented in Appendix~\ref{appendix:conditional-coverage}).

\subsection{Minimum-volume prediction sets for multi-output regression}
\label{section:multi-output}

\paragraph{Setup}
This task aims to learn a box-shaped prediction set for multi-output regression with a \emph{small volume}. Our learning task is regression with output dimension $d_{\out}>1$. We first learn a based predictor $\what{f}:\R^d\to\R^{d_{\out}}$ by minimizing the MSE loss on $D_{\train}$. We then learn a \emph{box-shaped} prediction set of the form $C_u(x) = \prod_{i=1}^{d_{\rm out}} [\what{f}_i(x) - \what{u}_i, \what{f}_i(x) + \what{u}_i]$ by one of the following methods:
\begin{itemize}[leftmargin=2em, topsep=0pt, itemsep=0pt]
\item (\coord): Each $\what{u}_i$ is obtained by vanilla conformalization~\eqref{equation:vanilla-t} over the $i$-th output coordinate on $D_{\cal}\cup D_{\recal}$. To guarantee $1-\alpha$ coverage, each coordinate is conformalized at level $1-\alpha/d_{\out}$, motivated by the union bound.
\item (\coordrecal): Perform the above on $D_{\cal}$ to learn $\what{u}_i$, and reconformalize an additional $t\ge 0$ on $D_{\recal}$ to reshape the prediction set \emph{proportionally}:
  \begin{align}
    \label{equation:multi-output-set}
    \textstyle
    C_{\what{u}, t}(x) = \prod_{i=1}^{d_{\rm out}} [\what{f}_i(x) - t\what{u}_i, \what{f}_i(x) + t\what{u}_i].
  \end{align}
\item (\cpgenrecal, ours): Optimize the volume directly over all $u\in\R^{d_{\out}}$ using~\eqref{problem:practical} on $D_{\cal}$, where $\ell_{\eff}(C_u; (x, y))= \prod_{i=1}^{d_{\out}} (2u_i)$ is chosen as the volume loss. We then reconformalize an additional $\what{t}\ge 0$ on $D_{\recal}$ to reshape the prediction set same as in~\eqref{equation:multi-output-set}. Note that this reconformalization step is equivalent to Algorithm~\ref{algorithm:multi-conformal-reconformalize} with the re-parametrization $\what{u}\mapsto(\what{\theta}, \what{t})$ where $\what{\theta}\in\R_{>0}^{d_{\out}-1}$ denotes the ratio between $\what{u}_i$, and $\what{t}\in\R_{>0}$ denotes a common scale.
\end{itemize}

Our datasets are a collection of \emph{next-state prediction tasks} with multi-dimensional continuous states in offline reinforcement learning (RL), constructed similarly as D4RL~\citep{fu2020d4rl} with some differences. Additional details about the dataset and experimental setup are in Appendix~\ref{appendix:multi-output-details}. We also test an additional \maxscore~baseline (which uses vanilla conformal prediction with score function $\|y - \what{f}(x)\|_\infty$, equivalent to a \emph{hypercube}-shaped predictor) in Appendix~\ref{appendix:maxscore}, which we find also performs worse than our \cpgenrecal.

\paragraph{Results}
Table~\ref{table:multi-output} reports the (test) coverage and volume of the above three methods. The \coord~method achieves valid coverage but is quite conservative (over-covers), which is as expected as the union bound is worst-case in nature and the coordinate-wise conformalization does not utilize the potential correlation between the output coordinates. \coordrecal~achieves approximately 90\% coverage with a much smaller volume. Our \cpgenrecal~also achieves valid 90\% coverage but a further lower volume across all tasks. This suggests that optimizing the volume over all possible $u\in\R^{d_{\out}}$ data-dependently using our \cpgenrecal~is indeed more flexible than pre-determined conformalization schemes such as \coord. 

\begin{table}[t] 
\centering 
\small 
\caption{\small {\bf Results for multi-output regression} on next-state prediction tasks, at level $1-\alpha=90\%$. For each method we report the (test) coverage and volume of its learned box-shaped prediction set. The reported volume is the ``halfened'' version $\prod_{i=1}^{d_{\out}} u_i$. All results are averaged over 8 random seeds.} 
\vspace{-1em} 
\label{table:multi-output} 
\centerline{ 
\begin{tabular}{lcccccc} 
\toprule 
 & \multicolumn{2}{c}{ \coord }  & \multicolumn{2}{c}{ \coordrecal } & \multicolumn{2}{c}{{ \cpgenrecal~(ours)}} \\ 
\cmidrule(r){2-3}  \cmidrule(r){4-5}  \cmidrule(r){6-7} 
Dataset & Coverage(\%) & Volume & Coverage(\%) & Volume & Coverage(\%) & Volume \\ 
\midrule 
  Cartpole &  $94.28$  &  $1.20\times 10^{-5}$  &  $90.17$  &  $5.10\times 10^{-6}$  &  $90.12$  &  $\mathbf{2.30\times 10^{-6}}$  \\ 
  Half-Cheetah &  $93.90$  &  $1.10\times 10^{-5}$  &  $90.06$  &  $1.23\times 10^{-6}$  &  $90.02$  &  $\mathbf{9.07\times 10^{-7}}$  \\ 
  Ant &  $93.56$  &  $3.37\times 10^{-3}$  &  $89.99$  &  $1.70\times 10^{-4}$  &  $90.02$  &  $\mathbf{8.25\times 10^{-5}}$  \\ 
  Walker &  $94.42$  &  $2.59\times 10^{-5}$  &  $90.01$  &  $7.33\times 10^{-7}$  &  $89.94$  &  $\mathbf{3.47\times 10^{-7}}$  \\ 
  Swimmer &  $95.62$  &  $2.80\times 10^{-5}$  &  $89.90$  &  $2.22\times 10^{-6}$  &  $90.13$  &  $\mathbf{1.46\times 10^{-7}}$  \\ 
  Hopper &  $92.87$  &  $2.81\times 10^{-9}$  &  $90.02$  &  $1.01\times 10^{-9}$  &  $89.92$  &  $\mathbf{8.25\times 10^{-10}}$  \\ 
  Humanoid &  $94.75$  &  $4.28\times 10^{-4}$  &  $89.95$  &  $8.53\times 10^{-8}$  &  $89.94$  &  $\mathbf{4.95\times 10^{-8}}$  \\ 
 \midrule 
  Nominal ($1-\alpha$) & $90.00$ & - & $90.00$ & - & $90.00$ & - \\ 
\bottomrule 
\end{tabular} 
} 
\end{table}

\paragraph{Additional experiment: label prediction sets for ImageNet}
We show that \cpgenrecal~can learn label prediction sets for ImageNet with valid coverage and improved size over existing approaches, by finding an optimized set of ensemble weights over multiple base neural networks (Table~\ref{table:imagenet}). The full setup and results are presented in Appendix~\ref{appendix:imagenet}.

\section{Conclusion}
This paper proposes Conformal Prediction with General Function Class, a conformal prediction algorithm that optimizes the efficiency metric subject to valid coverage over a general function class of prediction sets. We provide theoretical guarantees for its coverage and efficiency in certain situations, and develop a gradient-based practical implementation which performs well empirically on several large-scale tasks. We believe our work opens up many directions for future work, such as stronger theoretical guarantees via more structured function classes, further improving the gradient-based approximate implementation, or experiments on other uncertainty quantification tasks.

\section*{Acknowledgment}
We thank Silvio Savarese for the suggestion on the multi-output regression task, and Yuping Luo for the help with preparing the offline reinforcement learning dataset. We thank the anonymous reviewers for their valuable feedback.

\bibliographystyle{plainnat}
\bibliography{bib2}

\appendix
\section{Proof of Proposition~\ref{proposition:single-conformal}}
\label{appendix:proof-prop1}

Recall that $C_t(x)=[f(x) - t, f(x) + t]$ satisfies $\ell_{\eff}(C_t; (x, y)) = {\rm length}(C_t(x))=2t$ for any $(x,y)$. Also, we have
\begin{align*}
  \indic{y\notin C_t(x)} = \indic{y\notin [f(x) - t, f(x) + t]} = \indic{\abs{y - f(x)} > t},
\end{align*}
or equivalently $\indic{y \in C_t(x)} = \indic{\abs{y - f(x)}\le t}$.
Therefore, problem~\eqref{problem:single-conformal} is equivalent to
\begin{align*}
  \minimize_{t\ge 0} & ~~t \\
  \subjectto & ~~1 - \what{L}_{\coverage}(C_t) \defeq \frac{1}{n_{\cal}} \sum_{i=1}^{n_{\cal}} \indic{ |y_i - f(x_i)| \le t} \ge 1-\alpha.
\end{align*}
By definition of quantiles, this problem is solved at
\begin{align*}
  \what{t} = \ceil{(1-\alpha) n_{\cal}}\textrm{-th largest element of}~\set{\abs{y_i - f(x_i)}}_{i=1}^{n_{\cal}},
\end{align*}
which is the desired result.
\qed

\section{Proof of Proposition~\ref{proposition:main-gen}}
\label{appendix:proof-main-gen}
\begin{enumerate}[wide,label=(\alph*)]
\item As $C_{\what{\theta}, \what{t}}$ solves problem~\eqref{problem:multi-conformal-alg}, it satisfies the constraint $\what{L}_{\coverage}(C_{\what{\theta}, \what{t}})\le \alpha + \eps_0$. Therefore,
  \begin{align*}
    & \quad L_{\coverage}(C_{\what{\theta}, \what{t}}) = \underbrace{\what{L}_{\coverage}(C_{\what{\theta}, \what{t}})}_{\le \alpha+\eps_0} + L_{\coverage}(C_{\what{\theta}, \what{t}}) - \what{L}_{\coverage}(C_{\what{\theta}, \what{t}}) \\
    & \le \alpha + \eps_0 + \sup_{(\theta,t)\in\Theta\times\mc{T}} \abs{ L_{\coverage}(C_{\theta, t}) - \what{L}_{\coverage}(C_{\theta, t}) }  \\
    & = \alpha + \eps_0 + \eps_{\coverage}.
  \end{align*}
\item Suppose $\eps_{\coverage}\le \eps_0$. Taking any $(\theta, t)\in\Theta\times\mc{T}$ such that $L_{\coverage}(C_{\theta, t})\le \alpha$, we have
  \begin{align*}
    \what{L}_{\coverage}(C_{\theta, t}) \le L_{\coverage}(C_{\theta, t}) + \eps_{\coverage} \le \alpha + \eps_{\coverage} \le \alpha + \eps_0.
  \end{align*}
  This shows that $(\theta, t)$ lies within the constraint set of problem~\eqref{problem:multi-conformal-alg}. Thus as $(\what{\theta}, \what{t})$ further minimizes the loss $\what{L}_{\eff}$ within the constraint set, we have $\what{L}_{\eff}(C_{\what{\theta}, \what{t}}) \le \what{L}_{\eff}(C_{\theta, t})$. This shows that
  \begin{align*}
    & \quad L_{\eff}(C_{\what{\theta}, \what{t}}) - L_{\eff}(C_{\theta, t}) \\
    & \le \underbrace{\what{L}_{\eff}(C_{\what{\theta}, \what{t}}) - \what{L}_{\eff}(C_{\theta, t}) }_{\le 0} + 2\sup_{(\theta, t)\in\Theta\times\mc{T}} \abs{\what{L}_{\eff}(C_{\theta, t}) - L_{\eff}(C_{\theta, t})} \\
    & \le 2\eps_{\eff}.
  \end{align*}
  As the above holds simultaneously for all $(\theta, t)\in\Theta\times\mc{T}$ with at most $\alpha$ coverage loss, taking the sup over the left-hand side yields
  \begin{align*}
    L_{\eff}(C_{\what{\theta}, \what{t}}) - \mathop{\inf_{(\theta, t)\in\Theta\times \mc{T}}}_{L_{\coverage}(C_{\theta, t})\le \alpha} L_{\eff}(C_{\theta, t}) \le 2\eps_{\eff}.
  \end{align*}
\end{enumerate}
\qed

\section{Examples of good generalization for \cpgen}
\label{appendix:examples}

We provide two concrete examples where the concentration terms $\eps_{\eff}$ and $\eps_{\coverage}$ are small with high probability, in which case Proposition~\ref{proposition:main-gen} guarantees that \cpgen~learns an prediction set with approximate validity and near-optimal efficiency.

\begin{assumption}[Bounded and Lipschitz efficiency loss]
  \label{assumption:loss}
  The loss function $\ell_{\eff}$ satisfies
  \vspace{-0.5em}
  \begin{enumerate}[wide,label=A\arabic*]
  \item (Bounded loss). $\abs{ \ell_{\eff}(C_{\theta, t}; (x, y)) } \le M$ for all $(\theta, t)\in\Theta\times\mc{T}$ and all $(x,y)\in\mc{X}\times\mc{Y}$.~\label{assumption:loss-bound}
  \item ($t$-Lipschitzness). $t \mapsto \ell_{\eff}(C_{\theta, t}; (x, y))$ is $\Lipt$-Lipschitz for all $(\theta,x,y)\in\Theta\times\mc{X}\times\mc{Y}$.
    \label{assumption:loss-lipt}
  \end{enumerate}
\end{assumption}

\begin{assumption}[Bounded $\mc{T}$]
  \label{assumption:T-bound}
  The parameter space $\mc{T}\subset \R$ is bounded: $\sup_{t\in\mc{T}} |t| \le \BT$.
\end{assumption}

\subsection{Finite class}

\begin{proposition}[Finite class]
  \label{proposition:finite-class}
  Suppose $\Theta$ is a finite set ($N_\Theta\defeq |\Theta|<\infty$), and suppose Assumptions~\ref{assumption:loss-bound},~\ref{assumption:loss-lipt},~\ref{assumption:T-bound} hold. Then, we have with probability at least $1-\delta$ that
  \begin{align*}
    \textstyle
    \eps_{\coverage} \le C\sqrt{ \log( N_\Theta/\delta) / n_{\cal} }~~~{\rm and}~~~\eps_{\eff} \le C \cdot \brac{M\sqrt{\log( N_\Theta/\delta)} + \Lipt\BT} / \sqrt{n_{\cal}},
  \end{align*}
  where $C>0$ is an absolute constant.
\end{proposition}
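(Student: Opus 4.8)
The plan is to bound the two uniform concentration quantities $\eps_{\coverage}$ and $\eps_{\eff}$ separately, in each case by combining a uniform bound over the finite set $\Theta$ with a discretization (net) argument over the one-dimensional interval $\mc{T}$. For $\eps_{\coverage}$, I would exploit the nested-set structure: for each fixed $\theta$, the map $t\mapsto \what{L}_{\coverage}(C_{\theta,t})$ is non-increasing, and similarly $t\mapsto L_{\coverage}(C_{\theta,t})$ is non-increasing. A supremum of $|\what{L}_{\coverage}-L_{\coverage}|$ of a monotone pair of functions over an interval is controlled by evaluating on a finite grid plus the maximal jump of the population function; in fact, by a DKW/VC-type argument the class $\set{(x,y)\mapsto \indic{y\notin C_{\theta,t}(x)}}_{t\in\mc{T}}$ has VC dimension $1$ for each fixed $\theta$ (it is a monotone threshold family), so $\sup_{t\in\mc{T}}|\what{L}_{\coverage}(C_{\theta,t})-L_{\coverage}(C_{\theta,t})| \le C\sqrt{\log(1/\delta')/n_{\cal}}$ with probability $1-\delta'$ for each $\theta$. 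Taking a union bound over the $N_\Theta$ values of $\theta$ with $\delta'=\delta/N_\Theta$ gives $\eps_{\coverage}\le C\sqrt{\log(N_\Theta/\delta)/n_{\cal}}$. This is the cleanest route and avoids needing Assumptions~\ref{assumption:loss-bound}--\ref{assumption:T-bound} for the coverage term at all.

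For $\eps_{\eff}$, the function $\ell_{\eff}$ is not a zero-one loss, so I would instead use a net over $\mc{T}$ combined with Hoeffding plus the Lipschitz assumption. Concretely, fix a grid $t_1<\dots<t_N$ covering $[-\BT,\BT]\supset\mc{T}$ at resolution $\eta$, so $N\le 2\BT/\eta + 1$. For any $(\theta,t)$, pick the nearest grid point $t_j$; then $|\what{L}_{\eff}(C_{\theta,t})-\what{L}_{\eff}(C_{\theta,t_j})|\le \Lipt\eta$ and likewise for $L_{\eff}$ by Assumption~\ref{assumption:loss-lipt}, so $|\what{L}_{\eff}(C_{\theta,t})-L_{\eff}(C_{\theta,t})| \le |\what{L}_{\eff}(C_{\theta,t_j})-L_{\eff}(C_{\theta,t_j})| + 2\Lipt\eta$. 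The first term is handled by Hoeffding on a bounded ($[-M,M]$, Assumption~\ref{assumption:loss-bound}) average, giving $M\sqrt{\log(2/\delta')/(2n_{\cal})}$ per point, and a union bound over all $N\cdot N_\Theta$ pairs $(\theta,t_j)$ with $\delta' = \delta/(N N_\Theta)$ yields, with probability $1-\delta$,
\begin{align*}
  \eps_{\eff} \le CM\sqrt{\frac{\log(N N_\Theta/\delta)}{n_{\cal}}} + 2\Lipt\eta.
\end{align*}
Choosing $\eta = \Lipt\BT / (\Lipt\sqrt{n_{\cal}}) = \BT/\sqrt{n_{\cal}}$ balances the two terms: the net size becomes $N = O(\sqrt{n_{\cal}})$, so $\log(N N_\Theta/\delta) = O(\log(N_\Theta/\delta) + \log n_{\cal})$, and the Lipschitz remainder becomes $2\Lipt\BT/\sqrt{n_{\cal}}$. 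This gives $\eps_{\eff}\le C[M\sqrt{\log(N_\Theta/\delta)} + \Lipt\BT]/\sqrt{n_{\cal}}$ up to the mild $\sqrt{\log n_{\cal}}$ factor, which can either be absorbed or handled by a slightly finer chaining argument.

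The main obstacle — really a bookkeeping point rather than a deep one — is getting the stated bound \emph{without} a spurious $\log n_{\cal}$ factor inside the square root for $\eps_{\eff}$. The net argument as described produces $\sqrt{\log(N_\Theta/\delta) + \log n_{\cal}}$; to get exactly $M\sqrt{\log(N_\Theta/\delta)}$ one would either invoke a bounded-difference/Rademacher bound for the class $\set{\ell_{\eff}(C_{\theta,\cdot};\cdot)}$ (whose Rademacher complexity over the one-dimensional monotone-in-$t$ family is $O(\Lipt\BT/\sqrt{n_{\cal}})$ by a Dudley integral, with the $M$-dependence coming only through the tail term), or simply observe that the paper's target bound is stated up to the absolute constant $C$ and the $\log n_{\cal}$ is lower order. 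I would present the net argument as the primary proof and remark that the cleaner Rademacher route removes the extraneous log. All other steps — Hoeffding, the union bound, the Lipschitz covering estimate, and the VC-dimension-$1$ claim for monotone threshold families — are standard.
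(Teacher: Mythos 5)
Your treatment of $\eps_{\coverage}$ is correct and is essentially the paper's argument: the paper also fixes $\theta$, rewrites the coverage events as threshold events $\indic{t_\theta(x,y)\le t}$ in the score $t_\theta(x,y)=\inf\{t: C_{\theta,t}(x)\ni y\}$ (exactly your monotone-threshold / VC-dimension-one observation), applies the DKW inequality to get $\sup_{t}|\what{F}_\theta(t)-F_\theta(t)|\le\sqrt{\log(2/\delta')/2n_{\cal}}$, and union-bounds over the $N_\Theta$ choices of $\theta$. You are also right that Assumptions~\ref{assumption:loss-bound}--\ref{assumption:T-bound} are not needed for this half.

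For $\eps_{\eff}$, however, the argument you designate as primary does not prove the stated bound. The single-scale net over $\mc{T}$ with Hoeffding at each grid point and a union bound over $N\cdot N_\Theta$ pairs yields $CM\sqrt{(\log(N_\Theta/\delta)+\log N)/n_{\cal}}+2\Lipt\eta$ with $N=O(\sqrt{n_{\cal}})$, i.e.\ a term $M\sqrt{\log n_{\cal}/n_{\cal}}$. This is \emph{not} lower order relative to the target: with $N_\Theta$ and $\delta$ fixed, the proposition asserts $\eps_{\eff}\le C(M+\Lipt\BT)/\sqrt{n_{\cal}}$, and a $\sqrt{\log n_{\cal}}$ multiplier grows without bound, so it cannot be hidden in the absolute constant $C$. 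Your proposed remedy --- bound $\E[\sup_t|\what{L}_{\eff}-L_{\eff}|]$ by the Rademacher complexity of the one-parameter family, which is $O(\Lipt\BT/\sqrt{n_{\cal}})$, and recover the $M\sqrt{\log(N_\Theta/\delta)/n_{\cal}}$ tail from a bounded-difference concentration step --- is precisely what the paper does (symmetrization, the Rademacher contraction inequality applied to the $\Lipt$-Lipschitz map $t\mapsto\ell_{\eff}(C_{\theta,t};(x,y))$, then McDiarmid and a union bound over $\theta$), and it is the only route in your write-up that actually delivers the claimed inequality. You should promote it from a remark to the proof, and either drop the net argument or present it explicitly as yielding the weaker bound with the extra $\sqrt{\log n_{\cal}}$ factor.
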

\begin{proof}
We first bound $\eps_{\coverage}$. Fix any $\theta\in\Theta$, define
\begin{align}
  \label{equation:score}
  t_\theta(x, y) \defeq \inf\set{t\in\mc{T} : C_{\theta, t}(x) \ni y}
\end{align}
to be the smallest possible $t\in\mc{T}$ such that the $C_{\theta, t}(x)$ contains $y$. Observe that, as $\set{C_{\theta, t}(x)}_{t\in\mc{T}}$ are nested sets, the coverage event can be rewritten as $\indic{y \in C_{\theta, t}(x)}=\indic{t \ge t_\theta(x, y)}$ for any $(x,y)$. Therefore, we have
\begin{align*}
  & \quad \sup_{t\in\mc{T}} \abs{ \what{L}_{\coverage}(C_{\theta, t}) - L_{\coverage}(C_{\theta, t})} \\
  & = \sup_{t\in\mc{T}} \abs{ \frac{1}{n_{\cal}}\sum_{i=1}^{n_\cal} \indic{y_i \notin C_{\theta, t}(x_i) } - \P\paren{ Y \notin C_{\theta, t}(X) } } \\
  & = \sup_{t\in\mc{T}} \abs{ \frac{1}{n_{\cal}}\sum_{i=1}^{n_\cal} \indic{y_i \in C_{\theta, t}(x_i) } - \P\paren{ Y \in C_{\theta, t}(X) } } \\
  & = \sup_{t\in\mc{T}} \abs{ \frac{1}{n_{\cal}}\sum_{i=1}^{n_\cal} \indic{t_\theta(x, y) \le t} - \P_{(x,y)}\paren{ t_\theta(X,Y) \le t} } \\
  & = \sup_{t\in\mc{T}} \abs{ \what{F}_\theta(t) - F_\theta(t) },
\end{align*}
where we have defined $F_\theta:\R\to[0,1]$ as the CDF of the random variable $t_\theta(X,Y)$ and similarly $\what{F}_\theta$ as the empirical CDF of the same random variable over the finite dataset $D_{\cal}$. Applying the Dvoretzky-Kiefer-Wolfowitz (DKW) inequality~\citep[Corollary 1]{massart1990tight} yields that
\begin{align*}
  \sup_{t\in\mc{T}} \abs{ \what{F}_\theta(t) - F_\theta(t) } \le \sqrt{\frac{\log(2/\delta)}{2n_{\cal}}}
\end{align*}
with probability at least $1-\delta$. Now, taking the union bound with respect to $\theta\in\Theta$ (where for each $\theta$ we plug in tail probability $\delta/2N_\Theta$) we get that with probability at least $1-\delta/2$,
\begin{equation}
  \label{equation:finite-coverage-bound}
  \begin{aligned}
    & \quad \eps_{\coverage} = \sup_{\theta\in\Theta} \sup_{t\in\mc{T}} \abs{ \what{L}_{\coverage}(C_{\theta, t}) - L_{\coverage}(C_{\theta, t})} \\
    & = \sup_{\theta\in\Theta} \sup_{t\in\mc{T}} \abs{ \what{F}_\theta(t) - F_\theta(t) } \le \sqrt{\frac{\log(4N_\Theta/\delta)}{2n_{\cal}}} \le C\sqrt{\frac{\log(N_\Theta/\delta)}{n_{\cal}}}.
  \end{aligned}
\end{equation}
for some absolute constant $C>0$.

We next bound $\eps_{\eff}$. Fix any $\theta\in\Theta$. We have by standard symmetrization argument that
\begin{align*}
  & \quad \E\brac{ \sup_{t\in\mc{T}} \abs{ \what{L}_{\eff}(C_{\theta, t}) - L_{\eff}(C_{\theta, t}) } } \\
  & = \E\brac{ \sup_{t\in\mc{T}} \abs{ \frac{1}{n_{\cal}}\sum_{i=1}^{n_{\cal}} \ell_{\eff}(C_{\theta, t}; (x_i, y_i)) - \E\brac{ \ell_{\eff}(C_{\theta, t}; (X, Y)) } }} \\
  & \le 2\E_{(x_i, y_i), \eps_i}\brac{ \sup_{t\in\mc{T}} \abs{ \frac{1}{n_{\cal}} \sum_{i=1}^{n_{\cal}}\eps_i \ell_{\eff}(C_{\theta, t}; (x_i, y_i)) } } \\
  & \stackrel{(i)}{\le} 2\Lipt \cdot \E_{\eps_i} \brac{\sup_{t\in\mc{T}} \abs{ \frac{1}{n_{\cal}} \sum_{i=1}^{n_{\cal}}\eps_i \cdot t }  }
    = 2\Lipt \cdot \E_{\eps_i} \brac{\abs{ \frac{1}{n_{\cal}} \sum_{i=1}^{n_{\cal}}\eps_i } \cdot \sup_{t\in\mc{T}} |t|  } \\
  & \stackrel{(ii)}{\le} 2\Lipt\cdot \BT \cdot \E_{\eps_i}\brac{ \abs{ \frac{1}{n_{\cal}} \sum_{i=1}^{n_{\cal}}\eps_i } } \stackrel{(iii)}{\le} 2\Lipt\cdot \BT/\sqrt{n_{\cal}}.
\end{align*}
Above, (i) used the Lipschitzness Assumption~\ref{assumption:loss-lipt} and the Rademacher contraction inequality~\citep[Exercise 6.7.7]{vershynin2018high}; (ii) used Assumption~\ref{assumption:T-bound}, and (iii) used $\E_{\eps_i}\brac{ \abs{ \frac{1}{n_{\cal}} \sum_{i=1}^{n_{\cal}}\eps_i } } \le \paren{ \E_{\eps_i}\brac{ \paren{ \frac{1}{n_{\cal}} \sum_{i=1}^{n_{\cal}}\eps_i }^2 } }^{1/2} = 1/\sqrt{n_{\cal}}$. (Above $\eps_i\simiid {\rm Unif}(\set{\pm 1})$ are Rademacher variables.)

Next, as each loss $|\ell_{\eff}(C_{\theta, t}; (x, y))|\le M$ by Assumption~\ref{assumption:loss-bound}, the random variable
\begin{align*}
  \sup_{t\in\mc{T}} \abs{ \what{L}_{\eff}(C_{\theta, t}) - L_{\eff}(C_{\theta, t}) }
\end{align*}
satisfies the $M/n_{\cal}$ finite-difference property. Therefore by McDiarmid's Inequality, we have with probability at least $1-\delta$ that
\begin{align*}
  & \quad \sup_{t\in\mc{T}} \abs{ \what{L}_{\eff}(C_{\theta, t}) - L_{\eff}(C_{\theta, t}) } \\
  & \le \E\brac{ \sup_{t\in\mc{T}} \abs{ \what{L}_{\eff}(C_{\theta, t}) - L_{\eff}(C_{\theta, t}) } } + \sqrt{\frac{M^2\log(1/\delta)}{2n_{\cal}}} \\
  & \le C\cdot \frac{\Lipt \BT + M\sqrt{\log(1/\delta)}}{\sqrt{n_{\cal}}}.
\end{align*}
Finally, by union bound over $\theta\in\Theta$ (where we plug in $\delta/2N_\Theta$ as tail probability into the above), we have with probability at least $1-\delta/2$ that
\begin{equation}
  \label{equation:finite-eff-bound}
  \begin{aligned}
    & \quad \eps_{\eff} = \sup_{\theta\in\Theta} \sup_{t\in\mc{T}} \abs{ \what{L}_{\eff}(C_{\theta, t}) - L_{\eff}(C_{\theta, t}) }  \\
    & \le C\cdot \frac{\Lipt \BT + M\sqrt{\log(N_\Theta/\delta)}}{\sqrt{n_{\cal}}}.
  \end{aligned}
\end{equation}

\eqref{equation:finite-coverage-bound} together with~\eqref{equation:finite-eff-bound} is the desired result.
\end{proof}

\subsection{VC/Rademacher class}
Next, for any class $\mc{C}$, let $\vc(\mc{C})\defeq \VC(\set{(x, y)\mapsto \indic{y\notin C_{\theta, t}(x)}: \theta\in\Theta, t\in\mc{T}})$ denote its VC dimension with respect to the coverage loss.

\begin{proposition}[VC/Rademacher class]
  \label{proposition:vc-class}
  We have for some absolute constant $C>0$ that
  \begin{enumerate}[wide,label=(\alph*),topsep=0pt]
  \item Suppose $\VC(\mc{C}) = K+1<\infty$, then with probability at least $1-\delta/2$,
    \begin{align*}
      \eps_{\coverage} \le C\sqrt{(K+1 + \log(1/\delta))/n_{\cal}}.
    \end{align*}
  \item Suppose Assumption~\ref{assumption:loss-bound} holds. Then we have with probability at least $1-\delta/2$ that
    \begin{align*}
      \eps_{\eff} \le C \brac{ R_{n_{\cal}}^{\eff}(\mc{C}) + \sqrt{M^2\log(1/\delta) / n_{\cal}} },
    \end{align*}
    where $R_{n_{\cal}}^{\eff}(\mc{C})\defeq \E_{(x_i,y_i),\eps_i}\brac{ \sup_{(\theta, t)\in\Theta\times\mc{T}} \abs{\frac{1}{n_{\cal}}\sum_{i=1}^{n_{\cal}}\eps_i \ell_{\eff}(C_{\theta, t}; (x_i, y_i))} }$ is the Rademacher complexity of the class $\mc{C}$ with respect to $\ell_{\eff}$ (above $\eps_i\simiid {\rm Unif}(\set{\pm 1})$).
  \end{enumerate}
\end{proposition}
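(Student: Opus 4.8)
The plan is to read off both bounds from standard uniform-convergence theory, treating part (a) as a statement about a binary (VC) function class and part (b) as a statement about a bounded real-valued function class. In each case I will first bound the \emph{expected} uniform deviation by symmetrization, and then upgrade to the high-probability statement via a bounded-differences (McDiarmid) argument, which is available because every per-sample loss is bounded.

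For part (a), let $\mc{G}\defeq\set{(x,y)\mapsto\indic{y\notin C_{\theta,t}(x)}:\theta\in\Theta,\,t\in\mc{T}}$, so that $\VC(\mc{G})=\VC(\mc{C})=K+1$ by definition and $\eps_{\coverage}=\sup_{g\in\mc{G}}\abs{\frac1{n_{\cal}}\sum_i g(x_i,y_i)-\E[g(X,Y)]}$ is exactly the uniform empirical-process deviation over $\mc{G}$. First, by symmetrization followed by the Dudley entropy-integral bound together with the Sauer--Shelah lemma (the standard VC generalization bound; see e.g.~\citep{vershynin2018high}), $\E[\eps_{\coverage}]\le C\sqrt{(K+1)/n_{\cal}}$. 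Second, since each $g$ takes values in $[0,1]$, replacing a single sample changes $\eps_{\coverage}$ by at most $1/n_{\cal}$, so McDiarmid's inequality gives, with probability at least $1-\delta/2$,
\[
\eps_{\coverage}\le\E[\eps_{\coverage}]+\sqrt{\frac{\log(2/\delta)}{2n_{\cal}}}\le C\sqrt{\frac{K+1+\log(1/\delta)}{n_{\cal}}},
\]
after absorbing constants.

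For part (b) the argument is parallel. Under Assumption~\ref{assumption:loss-bound}, each summand $\ell_{\eff}(C_{\theta,t};(x_i,y_i))$ lies in $[-M,M]$. Symmetrization gives $\E[\eps_{\eff}]\le 2R_{n_{\cal}}^{\eff}(\mc{C})$, with $R_{n_{\cal}}^{\eff}(\mc{C})$ exactly the Rademacher complexity appearing in the statement; moreover $\eps_{\eff}$ has bounded differences $2M/n_{\cal}$, so McDiarmid yields, with probability at least $1-\delta/2$,
\[
\eps_{\eff}\le\E[\eps_{\eff}]+C\sqrt{\frac{M^2\log(1/\delta)}{n_{\cal}}}\le C\brac{R_{n_{\cal}}^{\eff}(\mc{C})+\sqrt{\frac{M^2\log(1/\delta)}{n_{\cal}}}}.
\]

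These are all textbook manipulations, so I do not anticipate a genuine obstacle. The one point requiring mild care is obtaining the clean $\sqrt{(K+1)/n_{\cal}}$ rate in part (a) \emph{without} a spurious $\sqrt{\log n_{\cal}}$ factor: a naive union bound over the sample-projection of $\mc{G}$ only gives $\sqrt{(K+1)\log n_{\cal}/n_{\cal}}$, so one must use the refined chaining (Dudley) bound for VC classes. If preferred, the proposition could instead be stated with this slightly weaker rate, which would not affect any downstream conclusion drawn via Proposition~\ref{proposition:main-gen}.
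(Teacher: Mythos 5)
Your proposal is correct and follows essentially the same route as the paper: for part (a) the paper likewise invokes the standard Rademacher/VC generalization bound (proved exactly via the chaining argument you describe, which is why no spurious $\sqrt{\log n_{\cal}}$ factor appears) together with McDiarmid's inequality, and for part (b) it uses the same symmetrization-to-Rademacher-complexity step followed by McDiarmid under the bounded-difference property from Assumption~\ref{assumption:loss-bound}. The only cosmetic difference is the bounded-difference constant ($M/n_{\cal}$ in the paper versus your $2M/n_{\cal}$), which is absorbed into the absolute constant $C$ either way.
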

\begin{proof}

\begin{enumerate}[wide,label=(\alph*)]
\item By assumption, the class of Boolean functions $\set{(x,y)\mapsto \indic{y\notin C_{\theta,t}(x)}}_{(\theta,t)\in\Theta\times\mc{T}}$ has VC dimension $K+1<\infty$. Therefore by the standard Rademacher complexity bound for VC classes~\citep[Theorem 8.3.23]{vershynin2018high} and McDiarmid's Inequality, we have with probability at least $1-\delta/2$ that
  \begin{align*}
    & \quad \eps_{\coverage} = \sup_{(\theta, t)\in\Theta\times\mc{T}} \abs{ \frac{1}{n_{\cal}}\sum_{i=1}^{n_\cal} \indic{y_i \notin C_{\theta, t}(x_i) y_i} - \P\paren{ Y \notin C_{\theta, t}(X) } } \\
    & \le C\sqrt{ \frac{K+1}{n_{\cal}} } + \sqrt{\frac{\log(2/\delta)}{2n_{\cal}}} \le C\sqrt{\frac{K+1+\log(1/\delta)}{n_{\cal}}}.
  \end{align*}
  
\item We have by standard symmetrization argument that (below $\eps_i\simiid {\rm Unif}(\set{\pm 1})$ denote Rademacher variables)
  \begin{align*}
    & \quad \E\brac{\eps_{\eff}} = \E\brac{\sup_{\theta\in\Theta, t\in\mc{T}} \abs{ \what{L}_{\eff}(C_{\theta, t}) - L_{\eff}(C_{\theta, t}) }} \\
    & = \E\brac{ \sup_{\theta\in\Theta, t\in\mc{T}} \abs{ \frac{1}{n_{\cal}}\sum_{i=1}^{n_{\cal}} \ell_{\eff}(C_{\theta, t}; (x_i, y_i)) - \E\brac{ \ell_{\eff}(C_{\theta, t}; (X, Y)) } }} \\
    & \le 2 \E_{(x_i, y_i), \eps_i}\brac{ \sup_{\theta\in\Theta, t\in\mc{T}} \abs{ \frac{1}{n_{\cal}}\sum_{i=1}^{n_{\cal}} \eps_i \ell_{\eff}(C_{\theta, t}; (x_i, y_i)) } } = 2R_n(\mc{C}).
  \end{align*}
  Further by Assumption~\ref{assumption:loss-bound}, the quantity $\eps_{\eff}$ satisfies $M/n_{\cal}$ bounded-difference, so applying McDiarmid's Inequality gives that with probability at least $1-\delta/2$,
  \begin{align*}
    \eps_{\eff} \le \E\brac{\eps_{\eff}} + \sqrt{\frac{2M^2\log(2/\delta)}{n_{\cal}}} \le C\brac{R_{n_{\cal}}^{\eff}(\mc{C}) + \sqrt{\frac{M^2\log(1/\delta)}{n}}}.
  \end{align*}
\end{enumerate}
\end{proof}

\subsection{Case study: Linear class}
\label{appendix:linear-class}
In this section, we study prediction intervals with a specific linear structure and show that it satisfies the conditions of the VC/Rademacher class of Proposition~\ref{proposition:vc-class}.

Concretely, suppose we have a regression task ($\mc{Y}=\R$), and the prediction
interval $C_{\theta,t}(x)$ takes a linear form
\begin{align}
  \label{equation:linear-interval}
    C_{\theta, t}(x) = [\theta^\top \Phi_{\low}(x) - t\sigma(x), \theta^\top \Phi_{\up}(x) + t\sigma(x)],
\end{align}
where $\theta\in\Theta\subset \R^K$, $\Phi_{\up}, \Phi_{\low}:\mc{X}\to\R^K$ are feature maps such that $\Phi_{\low}(x)_i\le \Phi_{\up}(x)_i$ for all $i\in[K]$, $\sigma:\mc{X}\to \R_{>0}$.

For intuitions, we can think of $\Phi_{\set{\up, \low}}$ as pretrained representation functions and $\sigma$ as an (optional) pretrained function for modeling the variability of $y|x$. Note that this encompasses linear ensembling of several existing methods, such as vanilla conformal regression~\citep{lei2018distribution} by taking $\Phi_{\up}=\Phi_{\low}=\Phi$ where each $\Phi_i:\mc{X}\to \R$ is a base predictor, as well as Conformalized Quantile Regression~\citep{romano2019conformalized} where each $(\Phi_{\low, i}, \Phi_{\up,i})$ is a pair of learned lower and upper quantile functions.

Our goal is to find an optimal linear function of this representation that yields the shortest prediction interval (with fixed width) subject to valid coverage.

We assume that both the features and the parameters are bounded:

\begin{assumption}[Bounded features and parameters]
  \label{assumption:linear-bound}
  We have $\sup_{\theta\in\Theta}\norm{\theta}\le \BTheta$, $\sup_{x\in\mc{X}}\norm{\Phi(x)}\le \BPhi$, $\sup_{x\in\mc{X}} \sigma(x) \le \Bsigma$, and $\sup_{t\in\mc{T}} |t| \le \BT$.
\end{assumption}

The following result shows that Proposition~\ref{proposition:vc-class} is applicable on the linear class.
\begin{corollary}[Coverage and length guarantees for linear class]
  For the $(K+1)$-dimensional linear class~\eqref{equation:linear-interval}, suppose Assumption~\ref{assumption:linear-bound} holds, and we take the efficiency loss to be the length of the interval: $\ell_{\eff}(C; (x, y))\defeq \length(C(x))$.
  Then, we have with probability at least $1-\delta$ (over the calibration dataset $D_{\cal}$) that
  \begin{align*}
    \eps_{\coverage} \le C\sqrt{\frac{K+1+\log(1/\delta)}{n_{\cal}}},~~~{\rm and}~~~\eps_{\eff} \le C\brac{\BTheta\BPhi + \BT\Bsigma} \cdot \sqrt{\frac{\log(1/\delta)}{n_{\cal}}},
  \end{align*}
  where $C>0$ is an absolute constant.
\end{corollary}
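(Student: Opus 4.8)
The plan is to instantiate Proposition~\ref{proposition:vc-class} on the linear interval class~\eqref{equation:linear-interval}, which amounts to two subgoals: (i) controlling the VC dimension of the coverage-loss Boolean class in order to invoke part (a), and (ii) bounding the Rademacher complexity $R_{n_\cal}^\eff(\mc{C})$ with respect to the length loss in order to invoke part (b). The absolute-constant $\sqrt{\log(1/\delta)/n_\cal}$ terms then come for free from the two parts of Proposition~\ref{proposition:vc-class}, and rescaling $\delta$ by a constant factor gives the stated $1-\delta$ (rather than $1-\delta/2$) conclusion. Assumption~\ref{assumption:linear-bound} provides exactly the boundedness needed: $\|\theta\|\le\BTheta$, $\|\Phi(x)\|\le\BPhi$ (interpreted for both $\Phi_\low,\Phi_\up$), $\sigma(x)\le\Bsigma$, $|t|\le\BT$, and it also yields the uniform bound $M$ on the length loss required by Assumption~\ref{assumption:loss-bound}.

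First I would bound the VC dimension. The miscoverage event is $\indic{y\notin C_{\theta,t}(x)} = \indic{y < \theta^\top\Phi_\low(x) - t\sigma(x)} + \indic{y > \theta^\top\Phi_\up(x) + t\sigma(x)}$ (the two events being disjoint since $\Phi_\low\le\Phi_\up$ coordinatewise and $\sigma>0$, $t\ge0$). Each of the two indicator families is a halfspace indicator in the parameter $(\theta,t)\in\R^{K+1}$ applied to a fixed feature vector built from $(x,y)$ — e.g. $\indic{\theta^\top\Phi_\low(x) - t\sigma(x) - y > 0}$ is linear in $(\theta,t)$ — so each family has VC dimension at most $K+1$ by the standard bound for linear threshold functions. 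The union (sum) of two classes of VC dimension $\le K+1$ has VC dimension $O(K+1)$ (e.g. via the Sauer--Shelah growth-function argument), so $\VC(\mc{C}) \le C(K+1)$. Plugging into Proposition~\ref{proposition:vc-class}(a) gives $\eps_\coverage \le C\sqrt{(K+1+\log(1/\delta))/n_\cal}$, which I would then split as $\le C\sqrt{(K+1)/n_\cal} + C\sqrt{\log(1/\delta)/n_\cal}$ if the precise stated form is desired, or simply absorb — the stated corollary writes it in the combined form, so no splitting is actually needed here.

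Next I would bound $R_{n_\cal}^\eff(\mc{C})$. The length of the interval~\eqref{equation:linear-interval} is $\ell_\eff(C_{\theta,t};(x,y)) = \theta^\top\Phi_\up(x) - \theta^\top\Phi_\low(x) + 2t\sigma(x) = \theta^\top(\Phi_\up(x)-\Phi_\low(x)) + 2t\sigma(x)$, a linear function of $(\theta,t)$. So the empirical Rademacher complexity is that of a linear class: $R_{n_\cal}^\eff(\mc{C}) = \E_{\eps}\sup_{\theta,t}\abs{\frac1{n_\cal}\sum_i \eps_i[\theta^\top(\Phi_\up(x_i)-\Phi_\low(x_i)) + 2t\sigma(x_i)]}$. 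I would split this into the $\theta$-part and the $t$-part by the triangle inequality. For the $\theta$-part, Cauchy--Schwarz gives $\sup_{\|\theta\|\le\BTheta}\theta^\top v \le \BTheta\|v\|$ with $v=\frac1{n_\cal}\sum_i\eps_i(\Phi_\up(x_i)-\Phi_\low(x_i))$, and $\E\|v\| \le (\E\|v\|^2)^{1/2} = \frac1{n_\cal}(\sum_i\|\Phi_\up(x_i)-\Phi_\low(x_i)\|^2)^{1/2} \le 2\BPhi/\sqrt{n_\cal}$ using independence of the $\eps_i$ and $\|\Phi_\up(x_i)-\Phi_\low(x_i)\|\le 2\BPhi$. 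For the $t$-part, $\sup_{|t|\le\BT}\abs{2t\cdot\frac1{n_\cal}\sum_i\eps_i\sigma(x_i)} = 2\BT\abs{\frac1{n_\cal}\sum_i\eps_i\sigma(x_i)}$, whose expectation is $\le 2\BT\Bsigma/\sqrt{n_\cal}$ by the same second-moment computation. Hence $R_{n_\cal}^\eff(\mc{C}) \le C(\BTheta\BPhi + \BT\Bsigma)/\sqrt{n_\cal}$. Taking $M = C(\BTheta\BPhi + \BT\Bsigma)$ (a valid uniform bound on the length by the same boundedness assumptions, up to a constant), Proposition~\ref{proposition:vc-class}(b) then yields $\eps_\eff \le C(\BTheta\BPhi + \BT\Bsigma)\cdot(1 + \sqrt{\log(1/\delta)})/\sqrt{n_\cal}$, and since $1 \le \sqrt{\log(1/\delta)}$ can be arranged (or absorbed into the constant) this matches the stated $\eps_\eff \le C(\BTheta\BPhi+\BT\Bsigma)\sqrt{\log(1/\delta)/n_\cal}$. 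A final union bound over the two $1-\delta/2$ events completes the argument.

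The main obstacle is not any single estimate — each is routine — but rather matching the \emph{exact} constant structure of the stated bound: in particular verifying that the length loss genuinely admits the uniform bound $M \asymp \BTheta\BPhi + \BT\Bsigma$ so that the McDiarmid term in Proposition~\ref{proposition:vc-class}(b) scales as claimed, and being slightly careful that the corollary states a clean $\sqrt{\log(1/\delta)/n_\cal}$ for $\eps_\eff$ whereas the raw bound carries an additive $\Rad + \sqrt{M^2\log(1/\delta)/n}$; reconciling these requires observing that both pieces share the same $\BTheta\BPhi+\BT\Bsigma$ prefactor and that the $\Rad$ piece (lacking a $\log$) is dominated once $\log(1/\delta)\ge1$. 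The VC-dimension-of-a-sum step also deserves a careful one-line justification rather than being asserted, but it is standard.
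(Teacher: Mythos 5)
Your proposal is correct and follows essentially the same route as the paper: decompose the miscoverage indicator into two halfspace families in $(\theta,t)$, bound the VC dimension of their union by $O(K+1)$, and feed this into Proposition~\ref{proposition:vc-class}(a); then bound the length loss by $M\asymp \BTheta\BPhi+\BT\Bsigma$ and the Rademacher complexity by the same Cauchy--Schwarz/second-moment split over the $\theta$- and $t$-parts before invoking Proposition~\ref{proposition:vc-class}(b). Your remark about absorbing the $R_{n_\cal}^{\eff}$ term into the $\sqrt{\log(1/\delta)/n_\cal}$ factor is a small precision the paper glosses over, but it does not change the argument.
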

\begin{proof}
  We verify the conditions of Proposition~\ref{proposition:vc-class}. First, we have
  \begin{align*}
    \indic{y\notin C_{\theta, t}(x)} = \indic{ \max\set{y - \theta^\top \Phi_{\up}(x), \theta^\top\Phi_{\low}(x) - y} > t\sigma(x) }.
  \end{align*}
  The set within the indicator above is the union of two sets $\set{(x, y): y - \theta^\top\Phi_{\up}(x) - t\sigma(x) > 0}$ and $\set{(x, y): \theta^\top\Phi_{\low}(x) - y - t\sigma(x) > 0}$. Note that each family of sets (over $(\theta, t)\in\R^K\times \R$ are linear halfspaces with feature dimension $K+2$), and thus has VC-dimension $\le K+2$. Applying the VC dimension bound for unions of sets~\citep[Theorem 1.1]{van2009note}, we get $\VC(\mc{C})\le C'(K+2 + K+2) \le C(K+1)$ for some absolute constant $C>0$. Therefore the condition of Proposition~\ref{proposition:vc-class}(a) holds from which we obtain the desired bound for $\eps_{\coverage}$.

  To bound $\eps_{\eff}$, we first note that for any $(x, y)\in\mc{X}\times \R$,
  \begin{align*}
    & \quad \abs{\ell_{\eff}(C_{\theta, t}; (x, y))} = \abs{\length(C_{\theta, t}(x))} \\
    & = \theta^\top (\Phi_{\up}(x) - \Phi_{\low}(x)) + 2t\sigma(x) \le \norm{\theta}\norm{\Phi_{\up}(x) - \Phi_{\low}(x)} + 2t\sigma(x) \\
    & \le 2\BTheta\BPhi + 2\BT\Bsigma \eqdef M,
  \end{align*}
  and thus the boundedness assumption (Assumption~\ref{assumption:loss-bound}) holds with $M$ defined above. Next, we have the following bound on the Rademacher complexity
  \begin{align*}
    & \quad R_{n_{\cal}}^{\eff}(\mc{C}) = \E\brac{ \sup_{(\theta, t)\in\Theta\times \mc{T}} \abs{ \frac{1}{n_{\cal}}\sum_{i=1}^{n_{\cal}} \eps_i \paren{ \theta^\top \paren{\Phi_{\up}(x_i) - \Phi_{\low}(x_i) } + 2t\sigma(x_i)  } } } \\
    & \le \E\brac{ \sup_{\theta\in\Theta} \abs{\< \theta, \frac{1}{n_{\cal}}\sum_{i=1}^{n_{\cal}} \eps_i \paren{\Phi_{\up}(x_i) - \Phi_{\low}(x_i) } \>} } + \E\brac{ \sup_{t\in\mc{T}} \abs{ 2t \cdot \frac{1}{n_{\cal}}\sum_{i=1}^{n_{\cal}} \eps_i\sigma(x_i) } } \\
    & \le \sup_{\theta\in\Theta} \norm{\theta} \cdot \E\brac{ \norm{\frac{1}{n_{\cal}}\sum_{i=1}^{n_{\cal}} \eps_i \paren{\Phi_{\up}(x_i) - \Phi_{\low}(x_i) }}^2 }^{1/2} + 2\sup_{t\in\mc{T}} |t| \cdot \E\brac{ \paren{ \frac{1}{n_{\cal}}\sum_{i=1}^{n_{\cal}} \eps_i\sigma(x_i) }^2 }^{1/2} \\
    & \le \BTheta \cdot \E\brac{ \frac{1}{n_{\cal}}\norm{\Phi_{\up}(x_1) - \Phi_{\low}(x_1)}^2 }^{1/2} + 2\BT \cdot \E\brac{ \frac{1}{n_{\cal}} \sigma^2(x_1)}^{1/2} \\
    & \le C\cdot \frac{\BTheta\BPhi + \BT\Bsigma}{\sqrt{n_{\cal}}}.
  \end{align*}
  Applying Proposition~\ref{proposition:vc-class}(b), we get $\eps_{\eff} \le C \cdot \brac{\BTheta\BPhi + \BT\Bsigma}\cdot \sqrt{\log(1/\delta)/n_{\cal}}$ with probability at least $1-\delta$. This is the desired bound for $\eps_{\eff}$.
\end{proof}

\section{Theoretical guarantee for~\cpgenrecal}
\label{appendix:theory-recal}

In this section we state and prove the formal theoretical guarantee for the \cpgenrecal~algorithm (Algorithm~\ref{algorithm:multi-conformal-reconformalize}).

Define the score $t_\theta(X, Y)\defeq \inf\set{t\in\mc{T}: Y \in C_{\theta, t}(X)}$ and  let $F_{\theta}(t)\defeq \P(Y \in C_{\theta, t}(X))=\P(t_\theta(X, Y)\le t)$ denote its CDF.

 \begin{assumption}[Lower bounded density for score function]
   \label{assumption:cdf}
   For any $\theta\in\Theta$, $t_\theta(X, Y)$ has a positive density $f_\theta(t)=F'_\theta(t)>0$ on $t\in\mc{T}$. Further, let $t_{\theta,1-\alpha}\defeq \inf\set{t\in\mc{T}: F_\theta(t)\ge 1-\alpha}$ denote its $(1-\alpha)$ quantile, then there exists some constants $\cdflower,\cdfradius>0$ such that
   \begin{align*}
     \inf_{t\in [t_{\theta,1-\alpha}-\cdfradius, t_{\theta,1-\alpha} + \cdfradius]} f_\theta(t) \ge \cdflower.
   \end{align*}
 \end{assumption}

 \begin{proposition}[Valid coverage and near-optimal efficiency for reconformalized algorithm]
   \label{proposition:reconformalize}
   The following holds for Algorithm~\ref{algorithm:multi-conformal-reconformalize}:
   \begin{enumerate}[wide,label=(\alph*)]
   \item (Valid coverage) For any possible $\what{\theta}\in\Theta$ learned in Line~\ref{line:1} and the resulting $\what{t}_{\recal}$, we have
     \begin{align*}
       \E_{D_{\recal}}\brac{ L_{\coverage}(C_{\what{\theta}, \what{t}_{\recal}})} \le \alpha,~~~\textrm{and thus}~~~\P_{D_{\recal}, (X,Y)}\paren{ Y \in C_{\what{\theta}, \what{t}_{\recal}}(X) }\ge 1-\alpha.
     \end{align*}

   \item (Efficiency) Suppose Assumptions~\ref{assumption:loss-lipt} and~\ref{assumption:cdf} hold, $\max\set{\eps_{\coverage}+1/n_{\cal}, 2\sqrt{\log(1/\delta)/n_{\recal}}}\le \cdflower\cdfradius$ (recall the definition of $\eps_{\coverage}$ in~\eqref{equation:eps-coverage}), and $\eps_{\coverage}\le \eps_0$. Then for $\delta\le 0.5$, we have with probability at least $1-\delta$ that
     \begin{align*}
       L_{\eff}(C_{\what{\theta}, \what{t}_{\recal}}) \le \mathop{\min_{(\theta, t)\in\Theta\times \mc{T}}}_{L_{\coverage}(C_{\theta, t})\le \alpha} L_{\eff}(C_{\theta, t}) + 2\eps_{\eff} + C \Lipt \cdot \brac{ \eps_{\coverage} + \frac{1}{n_{\cal}}+ \sqrt{ \frac{\log(1/\delta)}{n_{\recal}} } } / \cdflower.
     \end{align*}
   \end{enumerate}
 \end{proposition}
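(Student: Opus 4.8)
We handle the two parts separately, working conditionally on $D_\cal$ throughout, so that $\what\theta$, $\what t$, and the (data-dependent) quantities $\eps_\eff,\eps_\coverage$ are all fixed.

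\emph{Part (a): coverage.} This is the usual split-conformal argument applied to the score $t_{\what\theta}(x,y)\defeq\inf\set{t\in\mc{T}: y\in C_{\what\theta,t}(x)}$, which is well defined because $C_{\what\theta,\cdot}$ is now a \emph{fixed} family of nested sets. By nestedness, $y\in C_{\what\theta,t}(x)\iff t\ge t_{\what\theta}(x,y)$, so the reconformalization step of Algorithm~\ref{algorithm:multi-conformal-reconformalize} sets $\what t_\recal$ equal to the $k$-th smallest of the scores $\set{t_{\what\theta}(x_i,y_i)}_{i\in D_\recal}$, where $k=\ceil{(1-\alpha)(n_\recal+1)}$. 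Since $\what\theta$ is a function of $D_\cal$ only and hence independent of $D_\recal$, the $n_\recal+1$ scores $\set{t_{\what\theta}(x_i,y_i)}_{i\in D_\recal}\cup\set{t_{\what\theta}(X,Y)}$ are exchangeable; consequently (using that Assumption~\ref{assumption:cdf} makes these scores continuous, so ties occur with probability zero) $\P\paren{t_{\what\theta}(X,Y)\le \what t_\recal}$ equals the probability that the test score has rank at most $k$ among the $n_\recal+1$ scores, which is $k/(n_\recal+1)\ge 1-\alpha$. Rewriting this event as $\set{Y\in C_{\what\theta,\what t_\recal}(X)}$ and taking expectation over $D_\recal$ yields both stated conclusions.

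\emph{Part (b): efficiency --- reduction.} Since $t\mapsto \ell_\eff(C_{\theta,t};(x,y))$ is non-decreasing and $\Lipt$-Lipschitz (Assumption~\ref{assumption:loss-lipt}), we have
\begin{align*}
  L_\eff(C_{\what\theta,\what t_\recal}) \le L_\eff(C_{\what\theta,\what t}) + \Lipt\cdot(\what t_\recal-\what t)_+.
\end{align*}
For the first term, since $\eps_\coverage\le\eps_0$, Proposition~\ref{proposition:main-gen}(b) applies verbatim and gives $L_\eff(C_{\what\theta,\what t})\le \inf_{(\theta,t):L_\coverage(C_{\theta,t})\le\alpha}L_\eff(C_{\theta,t})+2\eps_\eff$. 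It therefore suffices to show that, with probability at least $1-\delta$ over $D_\recal$,
\begin{align*}
  (\what t_\recal-\what t)_+ \le C\brac{\eps_\coverage+\tfrac1{n_\cal}+\sqrt{\tfrac{\log(1/\delta)}{n_\recal}}}\big/\cdflower.
\end{align*}

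\emph{Part (b): bounding the $t$-gap.} Let $F_{\what\theta}$ be the population CDF of $t_{\what\theta}(X,Y)$, let $\what F^\cal_{\what\theta},\what F^\recal_{\what\theta}$ be its empirical versions on $D_\cal,D_\recal$, and let $q\defeq t_{\what\theta,1-\alpha}$. Feasibility of $(\what\theta,\what t)$ in~\eqref{problem:multi-conformal-alg} gives $\what F^\cal_{\what\theta}(\what t)=1-\what L_\coverage(C_{\what\theta,\what t})\ge 1-\alpha-\eps_0$, so by the definition of $\eps_\coverage$ in~\eqref{equation:eps-coverage}, $F_{\what\theta}(\what t)\ge 1-\alpha-\eps_0-\eps_\coverage$. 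On the other side, $\what F^\recal_{\what\theta}(\what t_\recal)=k/n_\recal\le 1-\alpha+O(1/n_\recal)$, and the DKW inequality applied to the i.i.d.\ sample $D_\recal$ (conditionally on $\what\theta$) gives, with probability $\ge 1-\delta$, $\sup_t\abs{\what F^\recal_{\what\theta}(t)-F_{\what\theta}(t)}\le \sqrt{\log(2/\delta)/(2n_\recal)}$, hence $F_{\what\theta}(\what t_\recal)\le 1-\alpha+O(1/n_\recal)+\sqrt{\log(2/\delta)/(2n_\recal)}$. Subtracting, the population-CDF gap $F_{\what\theta}(\what t_\recal)-F_{\what\theta}(\what t)$ is at most $\eps_0+\eps_\coverage+O(1/n_\recal)+\sqrt{\log(2/\delta)/(2n_\recal)}$. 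Finally, the conditions $\max\set{\eps_\coverage+1/n_\cal,\,2\sqrt{\log(1/\delta)/n_\recal}}\le\cdflower\cdfradius$ guarantee that both $\what t$ and $\what t_\recal$, and thus the entire interval between them, lie in $[q-\cdfradius,q+\cdfradius]$, on which $F_{\what\theta}$ increases at rate at least $\cdflower$ (Assumption~\ref{assumption:cdf}); therefore $(\what t_\recal-\what t)_+\le (F_{\what\theta}(\what t_\recal)-F_{\what\theta}(\what t))_+/\cdflower$. Collecting terms (and taking $\eps_0$ of the same order as $\eps_\coverage$, the tightest choice consistent with $\eps_\coverage\le\eps_0$) gives the claim, and combining with the reduction above finishes the proof.

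\emph{Main obstacle.} Part (a) is routine. In part (b) the crux is the \emph{localization} step at the end: converting the bound on the population-CDF gap into a bound on $\what t_\recal-\what t$ requires that all the $t$-values involved actually sit inside the neighborhood of $q$ where the density is bounded below, and checking that this is not circular is precisely what the quantitative smallness conditions in Assumption~\ref{assumption:cdf} are designed for. The remaining care is bookkeeping: tracking the $O(1/n_\cal)$ and $O(1/n_\recal)$ granularity of the empirical CDFs when pinning down $\what t$ and $\what t_\recal$, and keeping straight which events hold conditionally on $D_\cal$ versus freshly over $D_\recal$.
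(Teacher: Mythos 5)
Your proposal is correct in substance and follows essentially the same route as the paper: exchangeability of the scores $t_{\what{\theta}}(x_i,y_i)$ with the test score for part (a), and for part (b) the decomposition into Proposition~\ref{proposition:main-gen}(b) plus a bound on the gap between $\what{t}$ and $\what{t}_{\recal}$ obtained by inverting the CDF $F_{\what{\theta}}$ near its $(1-\alpha)$-quantile, using $\eps_{\coverage}$ as a uniform CDF deviation on $D_{\cal}$, DKW on $D_{\recal}$, the density lower bound of Assumption~\ref{assumption:cdf} for localization, and $\Lipt$-Lipschitzness. The one bookkeeping difference worth noting: you control $F_{\what{\theta}}(\what{t})$ only from below via feasibility, which leaves a spurious $\eps_0$ in the final bracket that you then dismiss by ``taking $\eps_0\asymp\eps_{\coverage}$'' (not licensed by the hypotheses, which only give $\eps_{\coverage}\le\eps_0$); the paper instead uses that $\what{t}$ is the \emph{minimal} feasible $t$ for the learned $\what{\theta}$ (by monotonicity of $\ell_{\eff}$ in $t$), which pins $\what{F}^{\cal}_{\what{\theta}}(\what{t})$ to within $1/n_{\cal}$ of the constraint level, yields the two-sided bound $\abs{F_{\what{\theta}}(\what{t})-(1-\alpha)}\le \eps_{\coverage}+1/n_{\cal}$ with no $\eps_0$, and is both the source of the $1/n_{\cal}$ term in the statement and what licenses localizing $\what{t}$ under the stated condition $\eps_{\coverage}+1/n_{\cal}\le \cdflower\cdfradius$.
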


\begin{proof}
\begin{enumerate}[wide,label=(\alph*)]
\item As the learned parameter $\what{\theta}$ (and thus the family of nested sets $C_{\what{\theta}, t}$) is independent of the recalibration dataset $D_{\recal}$, we have that the scores $t_{\what{\theta}}(x,y)$ on dataset $D_{\recal}$ and a new test point $(X,Y)$ are exchangeable given any $\what{\theta}$. Therefore by~\citep[Proposition 1]{gupta2019nested}, we have for any $\what{\theta}\in\Theta$ that
  \begin{align*}
    \P_{D_{\recal}, (X, Y)}\paren{Y \in C_{\what{\theta}, \what{t}_{\recal}}(X) } \ge 1-\alpha,
  \end{align*}
  or equivalently $\E_{D_{\recal}}\brac{ L_{\coverage}(C_{\what{\theta}, \what{t}_{\recal}}) } \le \alpha$.
  
\item For any $\theta\in\Theta$, define the score function $t_\theta(x, y)$ the same as in~\eqref{equation:score}, and similarly define the CDF $F_\theta(t)\defeq \P(t_\theta(X,Y)\le t)$ and its empirical counterpart $\what{F}^{\cal}_\theta(t)$ and $\what{F}^{\recal}_\theta(t)$ as the finite-sample version on dataset $D_{\cal}$ and $D_{\recal}$ respectively.

  We first analyze $\what{t}$. By the same derivation as in~\eqref{equation:finite-coverage-bound}, we have
  \begin{align*}
    \sup_{t\in\mc{T}} \abs{ \what{F}^{\cal}_{\what{\theta}}(t) - F_{\what{\theta}}(t) } \le \sup_{(\theta, t)\in\Theta\times\mc{T}} \abs{\what{F}^{\cal}_\theta(t) - F_\theta(t)} = \eps_{\coverage}.
  \end{align*}
  As $(\what{\theta}, \what{t})$ solves the constrained ERM~\eqref{problem:multi-conformal-alg} and by the assumption that $\ell_{\eff}(C_{\theta, t}; (x, y))$ is monotone in $t$, we have that $\what{t}$ is the minimal value of $t\in\mc{T}$ such that $\what{F}^{\cal}_{\what{\theta}}(t)\ge 1-\alpha$. Therefore, (as $|D_{\cal}|=n_{\cal}$ and $\set{t_\theta(x_i, y_i)}_{i\in D_{\cal}}$ are almost surely distinct by Assumption~\ref{assumption:cdf},) we have
  \begin{align*}
    1-\alpha \le \what{F}^{\cal}_{\what{\theta}}(\what{t}) \le 1-\alpha + 1/n_{\cal}.
  \end{align*}
  This shows that
  \begin{align*}
    \abs{F_{\what{\theta}}(\what{t}) - F_{\what{\theta}}(t_{\what{\theta},1-\alpha})}  = \abs{F_{\what{\theta}}(\what{t}) - (1-\alpha)} \le \eps_{\coverage} + 1/n_{\cal},
  \end{align*}
  where we recall that $t_{\what{\theta},1-\alpha}$ is the $(1-\alpha)$ (population) quantile of $t_{\what{\theta}}(X,Y)$. Note that $F'_\theta(t) = f_\theta(t) \ge \cdflower$ on $t\in [t_{\what{\theta},1-\alpha} - \cdfradius, t_{\what{\theta},1-\alpha} + \cdfradius]$ by Assumption~\ref{assumption:cdf}. Further, $\eps_{\coverage}+1/n_{\cal} \le \cdflower\cdfradius$. Therefore, by monotonicity of $F_\theta$, we must have $\what{t} \in [t_{\what{\theta}, 1-\alpha} - \delta_0, t_{\what{\theta}, 1-\alpha} + \delta_0]$, and thus
  \begin{align}
    \label{equation:t-bound-1}
    \abs{ \what{t} -  t_{\what{\theta},1-\alpha} } \le (\eps_{\coverage} + 1/n_{\cal})/\cdflower.
  \end{align}

  We next analyze $\what{t}_{\recal}$. As the dataset $D_{\recal}$ is independent of $\what{\theta}$, we can apply the DKW Inequality~\citep[Corollary 1]{massart1990tight} to obtain that
  \begin{align*}
    \sup_{t\in\mc{T}} \abs{\what{F}^{\recal}_{\what{\theta}}(t) - F_{\what{\theta}}(t) } \le \sqrt{\frac{\log(1/\delta)}{2n_{\recal}}}
  \end{align*}
  with probability at least $1-\delta$. Using a similar argument as above, we get (for $\delta\le 0.5$)
  \begin{align*}
    \abs{F_{\what{\theta}}(\what{t}_{\recal}) - F_{\what{\theta}}(t_{\what{\theta},1-\alpha})} \le \sqrt{\frac{\log(1/\delta)}{2n_{\recal}}} + \frac{1}{n_{\recal}} \le 2\sqrt{\frac{\log(1/\delta)}{n_{\recal}}}.
  \end{align*}
  As $2\sqrt{\log(1/\delta)/n_{\recal}} \le \cdflower\cdfradius$, we can apply the similar argument as above to deduce that
  \begin{align}
    \label{equation:t-bound-2}
    \abs{ \what{t}_{\recal} -  t_{\what{\theta},1-\alpha} } \le 2\sqrt{\log(1/\delta)/n_{\recal}}/\cdflower.
  \end{align}
  Combining~\eqref{equation:t-bound-1} and~\eqref{equation:t-bound-2} and using the Lipschitzness of the efficiency loss (Assumption~\ref{assumption:loss-lipt}), we get
  \begin{align*}
    & \quad L_{\eff}(C_{\what{\theta}, \what{t}_{\recal}}) - L_{\eff}(C_{\what{\theta}, \what{t}}) \\
    & \le \Lipt \cdot \abs{\what{t}_{\recal} - \what{t}} \le \Lipt\cdot \paren{ \abs{\what{t}_{\recal} - t_{\what{\theta},1-\alpha} }  + \abs{ t_{\what{\theta},1-\alpha} - \what{t} } } \\
    & \le C \Lipt \cdot \brac{ \eps_{\coverage} + n_{\cal}^{-1} + \sqrt{ \frac{\log(1/\delta)}{n_{\recal}} } } / \cdflower.
  \end{align*}
  Finally, as we assumed $\eps_{\coverage}\le \eps_0$, the condition of Proposition~\ref{proposition:main-gen}(b) holds, so we have
  \begin{align*}
    L_{\eff}(C_{\what{\theta}, \what{t}}) \le \mathop{\inf_{(\theta, t)\in\Theta\times \mc{T}}}_{L_{\coverage}(C_{\theta, t})\le \alpha} L_{\eff}(C_{\theta, t}) + 2\eps_{\eff}.
  \end{align*}
  Summing the preceding two bounds, we get
  \begin{align*}
    L_{\eff}(C_{\what{\theta}, \what{t}_{\recal}}) \le \mathop{\inf_{(\theta, t)\in\Theta\times \mc{T}}}_{L_{\coverage}(C_{\theta, t})\le \alpha} L_{\eff}(C_{\theta, t}) + 2\eps_{\eff} + C \Lipt \cdot \brac{ \eps_{\coverage} + n_{\cal}^{-1} + \sqrt{ \frac{\log(1/\delta)}{n_{\recal}} } } / \cdflower.
  \end{align*}
  which is the desired result.
\end{enumerate}

\end{proof}
\section{Additional experimental details}
\label{appendix:exp-details}

\subsection{Conformal quantile finetuning}
\label{appendix:cqf-details}

\paragraph{Datasets}

Our choice of the datasets follows~\citep{feldman2021improving}. We provide information about these datasets in Table~\ref{table:reg-data}.

\begin{table}[h] 
\centering 
\small 
\caption{\small Information about the regression datasets. Here $(n, d)$ denotes the (sample size, feature dim).} 
\label{table:reg-data}
\vspace{-1em}
\centerline{
\begin{tabular}{ccc} 
\toprule 
  Dataset & $n$ & $d$ \\
\midrule
  MEPS\_19~\citep{meps19_data} & 15785 & 139 \\
  MEPS\_20~\citep{meps20_data} & 17541 & 139 \\
  MEPS\_21~\citep{meps21_data} & 15656 & 139 \\
  Facebook\_1~\citep{facebook_data} & 40948 & 53\\ 
  Facebook\_2~\citep{facebook_data} & 81311 & 53 \\ 
  kin8nm~\citep{kin8nm_data} & 8192 & 8 \\
  naval~\citep{naval_data} & 11934 & 17 \\
  bio~\citep{bio_data} & 45730 & 9 \\
  blog\_data~\citep{blog_data} & 52397 & 280 \\
\bottomrule 
\end{tabular}
}
\vspace{-1em}
\end{table}

All datasets are standardized so that inputs and labels have mean
$0$ and standard deviation $1$, and split into (train, cal, recal, test) with size 70\%, 10\%, 10\%, 10\% (varying with the random seed).

\paragraph{Base predictor and optimization}
Our network architecture is a 3-layer MLP with width 64 and output dimension 2 (for the lower and upper quantile). We use momentum SGD with initial learning rate $10^{-3}$ and momentum $0.9$, batch-size 1024, and run the optimization for a max of 10000 epochs. A 10x learning rate decay is performed if the validation loss on $D_{\cal}$ has not decreased in 10 epochs, and we stop the learning whenever the learning rate decay happens for 3 times. The loss function used in training $\what{F}=[\what{f}_{\low}, \what{f}_{\up}]$ is the summed pinball loss of level $\alpha/2$ for $\what{f}_{\low}$ and $1-\alpha/2$ for $\what{f}_{\up}$, following~\citep{romano2019conformalized}:
\begin{align*}
  \ell(\what{F}; (x_i, y_i)) = \ell_{\pinball}^{\alpha/2}(\what{f}_{\low}(x_i) - y_i) + \ell^{1-\alpha/2}_{\pinball}( \what{f}_{\up}(x_i) - y_i),
\end{align*}
where for any $\beta\in(0, 1)$, $\ell_{\pinball}^\beta$ is the pinball loss at level $\beta$:
\begin{equation*}
  \ell_{\pinball}^\beta(t) = \left\{
    \begin{aligned}
      & - \beta t & \textrm{if}~t < 0,\\
      & (1-\beta) t & \textrm{if}~t \ge 0.
    \end{aligned}
  \right.
\end{equation*}

\paragraph{Optimization details for \cpgenrecal}
For the conformal quantile finetuning procedure with our \cpgenrecal, we rewrite the miscoverage loss for the quantile-based prediction interval as
\begin{align*}
  \indic{y \notin C_{\theta, t}(x)} = \indic{ t - \max\set{ \theta_{\low}^\top \what{\Phi}(x) - y, y - \theta_{\up}^\top\what{\Phi}(x)} < 0}.
\end{align*}
(In practice our $\theta$ also includes a trainable bias same as the original top linear layer; here we abuse notation slightly to allow easier presentation.) We approximate the right-hand side above with the hinge loss to obtain the formulation~\eqref{problem:practical}. To solve that optimization problem, we use SGD on $(\theta, t)$ with learning rate $0.01$ and (ascent on) $\lambda$ with learning rate $0.1$. The batch-size here is 256 and the number of episodes is 1000. To ensure $t>0$ we use a log parametrization for $t$. Finally, $t_{\recal}$ is computed by the reconformalization step in Algorithm~\ref{algorithm:multi-conformal-reconformalize} on $D_{\recal}$.

\subsection{Multi-output regression}
\label{appendix:multi-output-details}

\paragraph{Datasets}

We generate offline datasets consisting of (state, action, next\_state) pairs within RL tasks within the OpenAI Gym~\citep{openaigym}. For each task, the data is generated by executing a medium-performing \emph{behavior policy} that is extracted from standard RL training runs. All tasks are continuous state and continuous action. Table~\ref{table:rl-data} summarizes the state and action dimension, along with the reward of the policies used for generating the data. All datasets contain 200K examples.

All datasets are standardized so that inputs and labels have mean
$0$ and standard deviation $1$, and split into (train, cal, recal, test) with size 70\%, 10\%, 10\%, 10\% (varying with the random seed).

\begin{table}[h] 
\centering 
\small 
\caption{\small Information about the next-state prediction datasets. Here $(d_S, d_A)$ denotes the (state, action) dimension of the corresponding RL task. Datasets with a (slim) note only extract a subset of the full state (so that $d_S$ is less than the full state dimension). We also report the mean reward of the behavior policies.} 
\label{table:rl-data}
\vspace{-1em}
\centerline{
\begin{tabular}{cccc} 
\toprule 
  RL Task & $d_S$ & $d_A$ & mean reward \\
\midrule
  Cartpole &  4 & 1 & 107 \\
  Half-Cheetah & 17 & 6 & 8015\\
  Ant (slim) & 27 & 8 & 4645 \\
  Walker & 17 & 6 & 3170 \\
  Swimmer & 8 & 2 & 51 \\
  Hopper & 11 & 3 & 2066 \\
  Humanoid (slim) & 45 & 17 & 1357 \\
\bottomrule 
\end{tabular}
}
\vspace{-1em}
\end{table}

\paragraph{Base predictor and optimization}
Our network architecture is a 3-layer MLP with width 64, input dimension $d_{\rm in}=d_S+d_A$, and output dimension $d_{\rm out}=d_S$. We use momentum SGD with initial learning rate $10^{-3}$, momentum $0.9$, and batch-size 512. We run the optimization for 1000 epochs with a 10x learning rate decay at epoch 500. The loss function for training the network is the standard MSE loss.

\paragraph{Optimization details for \cpgenrecal}
For the conformal quantile finetuning procedure with our \cpgenrecal, we rewrite the miscoverage loss for the box-shaped prediction set as
\begin{align*}
  \indic{y \notin C_{u}(x)} = \indic{ y\notin \prod_{i=1}^{d_{\rm out}} [\what{f}_i(x) - u_i, \what{f}_i(x) + u_i] } = \indic{ 1 - \max_{1\le i\le d_{\rm out}} |y_i - \what{f}_i(x)| / u_i < 0}.
\end{align*}
where we recall $u\in\R^{d_{\rm out}}$ is the learnable parameter within the initial optimization stage of \cpgenrecal~as discussed in Section~\ref{section:multi-output}. We approximate the right-hand side above with the hinge loss to obtain the formulation~\eqref{problem:practical}. To solve that optimization problem, we use SGD on $(\theta, t)$ with learning rate $0.01$ and (ascent on) $\lambda$ with learning rate $0.01$. The batch-size here is 1024 and the number of episodes is 1000. To ensure $u>0$ we use a log parametrization for $u$.

For the reconformalization step, we keep the (relative) ratios of the $\what{u}$ obtained above (as the $\what{\theta}$), and then reconformalize an additional $t_{\recal}>0$ on $D_{\recal}$ via the proportional reshaping of~\eqref{equation:multi-output-set}.

\subsection{Details for Figure~\ref{figure:fig1}}
\label{appendix:fig1-details}

Figure~\ref{figure:fig1} is obtained on one run of our conformal quantile finetuning experiment on the MEPS\_19 dataset, and illustrates the coverage-efficiency tradeoff. Both figures there compute the coverage and length on the (unseen) test set $D_{\test}$, for better illustration. Figure~\ref{figure:fig1} Left plots the family $[\what{f}_{\low}(x) - t, \what{f}_{\up}(x) + t]$ used by Conformalized Quantile Regression. Figure~\ref{figure:fig1} Right plots the family
\begin{align*}
  C_{\theta, t}(x) = [\theta_{\low}^\top \what{\Phi}(x) - t, \theta_{\up}^\top\what{\Phi}(x)].
\end{align*}
The specific function class of $\theta$ shown in the thinner lines is a finite set of linear interpolations of the original $\what{\theta}_0$ obtained in QR and the new $\what{\theta}$ obtained by conformal quantile finetuning, with combination weights within $\set{-0.3, -0.2, \dots, 1.0}$. The shaded region is then obtained by filling in the area. 
\section{Results for label prediction sets on ImageNet}
\label{appendix:imagenet}

Here we present the ImageNet label prediction set experiment abbreviated in Section~\ref{section:experiment}.

\paragraph{Dataset and model}
We take $K=9$ large-scale pretrained neural networks on the ImageNet training set~\citep{deng2009imagenet}. Our models are \{ResNeXt101, ResNet152, ResNet101, DenseNet161, ResNet18, ResNet50, VGG16, Inception, ShuffleNet\}, similar as in~\citep{angelopoulos2020uncertainty}.

We then consider task of constructing label prediction sets with valid coverage and small cardinality. We train and test out conformal procedures on the following two datasets, neither seen by the pretrained models:
\begin{enumerate}[label=(\arabic*)]
\item ImageNet-Val: The original validation set of ImageNet with $50000$ images. We randomly split (varying with seed) this into $|D_{\cal}|=10000$, $|D_{\recal}|=10000$, and $|D_{\test}|=30000$.
\item ImageNet-V2~\citep{recht2019imagenet}: A new validation set following the roughly the same collection routines of the original images in ImageNet, however believed to have a mild distribution shift and thus slightly harder for classifiers pretrained on ImageNet.  This dataset contains $10000$ images, which we randomly split (varying with seed) into $|D_{\cal}|=4000$, $|D_{\recal}|=1000$, and $|D_{\test}|=5000$.
\end{enumerate}

\paragraph{Methods for learning prediction sets}
Our constructions of the prediction sets are based on the Least Ambiguous Set-Valued Classifier (LAC) method of~\citep{sadinle2019least}, which turns any base predictor $p$ where $p(\cdot|x)$ denotes the predicted distribution of the $L=1000$ labels into a prediction set $C_t(x)$ via
\begin{align*}
  C_t(x) = \set{y\in[L]: p(y|x) > t},
\end{align*}
where $t$ is found by conformal prediction.

We consider learning a valid prediction set with smaller set size by finding an optimized \emph{ensemble weight} of the $K$ base predictors using our \cpgenrecal~algorithm. This means we learn prediction sets of the form
\begin{align*}
  C_{\theta, t}(x) = \set{y\in[L]: p_\theta(y|x) \defeq \sum_{k=1}^K \theta_k p_k(y|x) > t},
\end{align*}
where $\set{p_k}_{k\in[K]}$ are the base predictors.

Our \cpgenrecal~algorithm (and its practical implementation~\eqref{problem:practical}) would solve a primal-dual optimization problem with the efficiency loss and hinge approximate coverage constraint to optimize $(\theta, t)$. However, here the efficiency loss we care about (the cardinality) is non-differentiable. We make a further approximation by considering the $L_q^q$ norm with $q=0.5$ as the surrogate efficiency loss:
\begin{align*}
  \ell_{\eff}(\theta, t; (x_i, y_i)) \defeq \sum_{y'=1}^L \brac{ p_{\theta}(y'|x_i) - t  }_+^q,
\end{align*}
with the intuition that the $q\to 0$ limit is exactly the cardinality of $C_{\theta, t}(x_i)$. Our final optimization problem is then
\begin{align*}
  \min_{\theta\in\Delta_{K}, t>0} \max_{\lambda>0}\frac{1}{n_{\cal}}\sum_{i=1}^{n_{\cal}}\sum_{y'=1}^L \brac{ p_{\theta}(y'|x_i) - t  }_+^q +  \lambda \frac{1}{n_{\cal}}\sum_{i=1}^{n_{\cal}} \ell_{\hinge}(p_\theta(y_i | x_i) - t).
\end{align*}
We solve this by SGD on $(\theta, t)$ and (ascent on) $\lambda$, with the softmax parameterization for $\theta$ ( $\theta\in\Delta_K$ as an ensemble weight is a probability distribution) and log parametrization for $t>0$. The learning rate is $10^{-2}$ for $(\theta, t)$ and $10^{-4}$ for $\lambda$. We perform this optimization for 500 epochs over $D_{\cal}$ with batch-size 256 for ImageNet-Val and 64 for ImageNet-V2.

After we obtain the iterates $\set{\what{\theta}_j}$ (where $j$ denotes the epoch count), we perform a further iterate selection of first re-computing the $\what{t}(\what{\theta}_j)$ by conformalizing on $D_{\cal}$, and then choosing the iterate $j$ with the best average set size also on $D_{\cal}$, before feeding it into the reconformalization step with $D_{\recal}$. As the $D_{\recal}$ is only used in the reconformalization step, such as method still guarantees valid coverage like the original Algorithm~\ref{algorithm:multi-conformal-reconformalize}.

We compare our above algorithm against two baselines: conformalizing each individual model and reporting the best one, or conformalizing the uniform ensemble (which uses weights $\theta_{\rm unif}=\frac{1}{K}\ones_K$). For these two baselines, for fairness of comparison, we allow them to use the whole $D_{\cal}\cup D_{\recal}$ as the calibration set, as their construction (apart from pre-training) is not data-dependent.

\begin{table}[t] 
\centering 
\small 
\caption{{\small \textbf{Results for ImageNet Prediction Sets with Conformal Ensembling.}} For each method we report the (test) coverage and set size. Each entry reports the (mean, std) over 8 random seeds.}
\vspace{-1em}
\label{table:imagenet}
\centerline{
\begin{tabular}{lcccccc} 
\toprule 
 & \multicolumn{2}{c}{Best conformalized single model}  & \multicolumn{2}{c}{Conformalized uniform ensemble} & \multicolumn{2}{c}{Ensemble via \cpgenrecal~(ours)} \\ 
\cmidrule(r){2-3}  \cmidrule(r){4-5}  \cmidrule(r){6-7} 
Dataset & Coverage(\%) & Size & Coverage(\%) & Size & Coverage(\%) & Size \\ 
\midrule 
  ImageNet-Val &  $90.10 \pm 0.29$  &  $1.70 \pm 0.03$  &  $90.13 \pm 0.21$  &  $1.62 \pm 0.02$  &  $90.11 \pm 0.33$  &  $\mathbf{1.51 \pm 0.03}$ \\ 
  ImageNetV2 &  $90.01 \pm 0.71$  &  $5.00 \pm 0.24$  &  $89.93 \pm 0.71$  &  $4.66 \pm 0.22$  &  $90.18 \pm 0.85$  &  $\mathbf{4.39 \pm 0.44}$ \\ 
\bottomrule 
\end{tabular}
}
\vspace{-1em}
\end{table} 

\paragraph{Results}
Table~\ref{table:imagenet} shows that our algorithm is able to learn label prediction sets with valid coverage and improved set sizes over the baselines. This demonstrates the advantage of our method even in applications where the efficiency loss (here set size) is non-differentiable and needs to be further approximated to allow gradient-based algorithms.
\section{Ablation studies}

\subsection{Conformal quantile finetuning}
\label{appendix:cqf-ablations}

We report ablation results for the conformal quantile finetuning problem with nominal coverage level $1-\alpha\in\set{80\%, 95\%}$, and otherwise exactly the same setup as Section~\ref{section:cqf}. The conclusions are qualitatively the same as the $90\%$ version presented in Table~\ref{table:conformal-finetuning}.

\begin{table}[h]
\centering 
\small 
\caption{\small {\bf Results for conformal quantile finetuning} on real-data regression tasks at level $1-\alpha=80\%$. For each method we report the (test) coverage, length, and pinball loss of the corresponding base quantile predictor. All results are averaged over 8 random seeds.} 
\vspace{-1em} 
\centerline{ 
\label{table:conformal-finetuning-80} 
\begin{tabular}{lcccccc} 
\toprule 
 & \multicolumn{3}{c}{\cqr}  & \multicolumn{3}{c}{{\tt QR} + \cpgenrecal~(ours)} \\ 
\cmidrule(r){2-4}  \cmidrule(r){5-7} 
Dataset & Coverage(\%) & Length & $L_{\rm pinball}^{\rm test}$ & Coverage(\%) & Length & $L_{\rm pinball}^{\rm test}$ \\ 
\midrule 
  MEPS\_19 &  $80.42$  &  $0.702$  &  $0.154$  &  $80.45$  &  $\mathbf{0.514}$ &  $0.190$  \\ 
  MEPS\_20 &  $80.44$  &  $0.707$  &  $0.161$  &  $80.48$  &  $\mathbf{0.466}$ &  $0.200$  \\ 
  MEPS\_21 &  $79.91$  &  $0.696$  &  $0.151$  &  $79.85$  &  $\mathbf{0.618}$ &  $0.192$  \\ 
  Facebook\_1 &  $80.38$  &  $0.348$  &  $0.072$  &  $80.01$  &  $\mathbf{0.198}$ &  $0.137$  \\ 
  Facebook\_2 &  $79.96$  &  $0.329$  &  $0.063$  &  $79.80$  &  $\mathbf{0.189}$ &  $0.138$  \\ 
  kin8nm &  $79.59$  &  $0.865$  &  $0.119$  &  $78.69$  &  $\mathbf{0.832}$ &  $0.125$  \\ 
  naval &  $79.91$  &  $2.777$  &  $0.311$  &  $79.76$  &  $\mathbf{2.721}$ &  $0.311$  \\ 
  bio &  $80.07$  &  $1.791$  &  $0.222$  &  $80.54$  &  $\mathbf{1.674}$ &  $0.248$  \\ 
  blog\_data &  $80.64$  &  $0.399$  &  $0.082$  &  $80.10$  &  $\mathbf{0.272}$ &  $0.158$  \\ 
 \midrule 
  Nominal ($1-\alpha$) & $80.00$ & - & - & $80.00$ & - & - \\ 
\bottomrule 
\end{tabular} 
} 
\vspace{-1em} 
\end{table} 

\begin{table}[h] 
\centering 
\small 
\caption{\small {\bf Results for conformal quantile finetuning} on real-data regression tasks at level $1-\alpha=95\%$. For each method we report the (test) coverage, length, and pinball loss of the corresponding base quantile predictor. All results are averaged over 8 random seeds.} 
\vspace{-1em} 
\centerline{ 
\label{table:conformal-finetuning-95} 
\begin{tabular}{lcccccc} 
\toprule 
 & \multicolumn{3}{c}{\cqr}  & \multicolumn{3}{c}{{\tt QR} + \cpgenrecal~(ours)} \\ 
\cmidrule(r){2-4}  \cmidrule(r){5-7} 
Dataset & Coverage(\%) & Length & $L_{\rm pinball}^{\rm test}$ & Coverage(\%) & Length & $L_{\rm pinball}^{\rm test}$ \\ 
\midrule 
  MEPS\_19 &  $94.60$  &  $1.674$  &  $0.078$  &  $95.10$  &  $\mathbf{1.292}$ &  $0.091$  \\ 
  MEPS\_20 &  $94.72$  &  $1.650$  &  $0.081$  &  $94.78$  &  $\mathbf{1.261}$ &  $0.097$  \\ 
  MEPS\_21 &  $94.64$  &  $1.633$  &  $0.071$  &  $94.99$  &  $\mathbf{1.351}$ &  $0.086$  \\ 
  Facebook\_1 &  $94.96$  &  $0.797$  &  $0.036$  &  $95.04$  &  $\mathbf{0.601}$ &  $0.061$  \\ 
  Facebook\_2 &  $95.17$  &  $0.700$  &  $0.031$  &  $94.98$  &  $\mathbf{0.560}$ &  $0.060$  \\ 
  kin8nm &  $95.15$  &  $1.602$  &  $0.047$  &  $94.95$  &  $\mathbf{1.557}$ &  $0.048$  \\ 
  naval &  $94.87$  &  $3.308$  &  $0.084$  &  $94.83$  &  $\mathbf{3.265}$ &  $0.088$  \\ 
  bio &  $95.17$  &  $2.698$  &  $0.073$  &  $95.22$  &  $\mathbf{2.587}$ &  $0.084$  \\ 
  blog\_data &  $95.07$  &  $0.862$  &  $0.040$  &  $95.09$  &  $\mathbf{0.744}$ &  $0.068$  \\ 
 \midrule 
  Nominal ($1-\alpha$) & $95.00$ & - & - & $95.00$ & - & - \\ 
\bottomrule 
\end{tabular} 
} 
\vspace{-1em} 
\end{table}

\subsection{Multi-output regression}
\label{appendix:multi-output-ablations}

We report ablation results for the multi-output regression problem with nominal coverage level $1-\alpha\in\set{80\%, 95\%}$, and otherwise exactly the same setup as Section~\ref{section:multi-output}. The conclusions are qualitatively the same as the $90\%$ version presented in Table~\ref{table:multi-output}, except for one dataset at level $95\%$.

\begin{table}[h] 
\centering 
\small 
\caption{\small {\bf Results for multi-output regression} on next-state prediction tasks, at level $1-\alpha=80\%$. For each method we report the (test) coverage and volume of its learned box-shaped prediction set. The reported volume is the ``halfened'' version $\prod_{i=1}^{d_{\out}} u_i$. All results are averaged over 8 random seeds.} 
\vspace{-1em} 
\label{table:multi-output-80} 
\centerline{ 
\begin{tabular}{lcccccc} 
\toprule 
 & \multicolumn{2}{c}{ \coord }  & \multicolumn{2}{c}{ \coordrecal } & \multicolumn{2}{c}{{ \cpgenrecal~(ours)}} \\ 
\cmidrule(r){2-3}  \cmidrule(r){4-5}  \cmidrule(r){6-7} 
Dataset & Coverage(\%) & Volume & Coverage(\%) & Volume & Coverage(\%) & Volume \\ 
\midrule 
  Cartpole &  $87.82$  &  $1.74\times 10^{-6}$  &  $80.08$  &  $7.83\times 10^{-7}$  &  $80.09$  &  $\mathbf{7.45\times 10^{-7}}$  \\ 
  Half-Cheetah &  $88.28$  &  $4.26\times 10^{-7}$  &  $79.96$  &  $2.42\times 10^{-8}$  &  $80.04$  &  $\mathbf{1.37\times 10^{-8}}$  \\ 
  Ant &  $87.77$  &  $1.97\times 10^{-5}$  &  $80.12$  &  $3.97\times 10^{-7}$  &  $80.06$  &  $\mathbf{1.75\times 10^{-7}}$  \\ 
  Walker &  $90.25$  &  $5.88\times 10^{-7}$  &  $80.28$  &  $3.13\times 10^{-9}$  &  $80.28$  &  $\mathbf{1.45\times 10^{-9}}$  \\ 
  Swimmer &  $91.49$  &  $1.33\times 10^{-6}$  &  $79.99$  &  $6.18\times 10^{-8}$  &  $79.97$  &  $\mathbf{4.32\times 10^{-9}}$  \\ 
  Hopper &  $86.47$  &  $3.25\times 10^{-10}$  &  $79.87$  &  $7.40\times 10^{-11}$  &  $79.97$  &  $\mathbf{4.41\times 10^{-11}}$  \\ 
  Humanoid &  $90.84$  &  $2.86\times 10^{-7}$  &  $80.05$  &  $9.47\times 10^{-13}$  &  $80.02$  &  $\mathbf{2.41\times 10^{-13}}$  \\ 
 \midrule 
  Nominal ($1-\alpha$) & $80.00$ & - & $80.00$ & - & $80.00$ & - \\ 
\bottomrule 
\end{tabular} 
} 
\end{table} 

\begin{table}[h] 
\centering 
\small 
\caption{\small {\bf Results for multi-output regression} on next-state prediction tasks, at level $1-\alpha=95\%$. For each method we report the (test) coverage and volume of its learned box-shaped prediction set. The reported volume is the ``halfened'' version $\prod_{i=1}^{d_{\out}} u_i$. All results are averaged over 8 random seeds.} 
\vspace{-1em} 
\label{table:multi-output-95} 
\centerline{ 
\begin{tabular}{lcccccc} 
\toprule 
 & \multicolumn{2}{c}{ \coord }  & \multicolumn{2}{c}{ \coordrecal } & \multicolumn{2}{c}{{ \cpgenrecal~(ours)}} \\ 
\cmidrule(r){2-3}  \cmidrule(r){4-5}  \cmidrule(r){6-7} 
Dataset & Coverage(\%) & Volume & Coverage(\%) & Volume & Coverage(\%) & Volume \\ 
\midrule 
  Cartpole &  $97.21$  &  $1.07\times 10^{-4}$  &  $95.10$  &  $4.60\times 10^{-5}$  &  $95.12$  &  $\mathbf{8.61\times 10^{-6}}$  \\ 
  Half-Cheetah &  $96.80$  &  $2.37\times 10^{-4}$  &  $95.03$  &  $4.03\times 10^{-5}$  &  $95.01$  &  $\mathbf{3.29\times 10^{-5}}$  \\ 
  Ant &  $96.65$  &  $5.30\times 10^{-1}$  &  $95.02$  &  $4.87\times 10^{-2}$  &  $95.09$  &  $\mathbf{2.39\times 10^{-2}}$  \\ 
  Walker &  $97.01$  &  $8.08\times 10^{-4}$  &  $94.94$  &  $6.21\times 10^{-5}$  &  $94.99$  &  $\mathbf{4.27\times 10^{-5}}$  \\ 
  Swimmer &  $97.74$  &  $3.44\times 10^{-4}$  &  $94.95$  &  $3.77\times 10^{-5}$  &  $95.01$  &  $\mathbf{5.34\times 10^{-6}}$  \\ 
  Hopper &  $96.27$  &  $1.76\times 10^{-8}$  &  $94.96$  &  $\mathbf{8.23\times 10^{-9}}$  &  $94.96$  &  $1.19\times 10^{-8}$  \\ 
  Humanoid &  $97.22$  &  $3.58\times 10^{-1}$  &  $94.99$  &  $7.69\times 10^{-4}$  &  $94.91$  &  $\mathbf{7.49\times 10^{-4}}$  \\ 
 \midrule 
  Nominal ($1-\alpha$) & $95.00$ & - & $95.00$ & - & $95.00$ & - \\ 
\bottomrule 
\end{tabular} 
} 
\end{table}

\subsection{Comparison of \cpgen~and \cpgenrecal}
\label{appendix:cpgen}

We compare the performance of \cpgen~and \cpgenrecal~on the multi-output regression tasks using the same setup as Section~\ref{section:multi-output}. Recall that the vanilla \cpgen~optimizes both $(\what{\theta}, \what{t})$ on $D_{\cal}$ (we additionally reconformalize $\what{t}$ on the same $D_{\cal}$ to address the potential bias in $\what{t}$ brought by the approximate optimization~\eqref{problem:practical}), whereas our main \cpgenrecal~algorithm optimizes $\what{\theta}$ on $D_{\cal}$ and reconformalizes $\what{t}_{\recal}$ on $D_{\recal}$.

Table~\ref{table:cpgen} reports the results. Observe that, except for the volume on one dataset (Humanoid), there is no significant difference in both the coverage and the volume for the two methods.
For practice we recommend \cpgenrecal~whenever the exact coverage guarantee is important, yet this result shows that---perhaps originating from the fact that here $n_{\cal}=20000$ is large---the coverage (generalization error) of \cpgen~is also nearly valid, which may be better than what our Proposition~\ref{proposition:main-gen} suggests.

\begin{table}[b] 
\centering
\small
\caption{\small Comparison of \cpgen~and \cpgenrecal~on the multi-output regression tasks. The reported volume is the ``halfened'' version $\prod_{i=1}^{d_{\out}} u_i$.} 
\label{table:cpgen}
\vspace{-1em}
\centerline{
\begin{tabular}{lcccc} 
\toprule 
 & \multicolumn{2}{c}{{\cpgenrecal}} & \multicolumn{2}{c}{{\cpgen}} \\ 
\cmidrule(r){2-3}  \cmidrule(r){4-5} 
Dataset & Coverage(\%) & Volume & Coverage(\%) & Volume \\ 
\midrule 
  Cartpole &  $90.12$  &  $2.30\times 10^{-6}$  &  $90.09$  &  $2.30\times 10^{-6}$  \\ 
  Half-Cheetah &  $90.02$  &  $9.07\times 10^{-7}$  &  $89.96$  &  $8.83\times 10^{-7}$  \\ 
  Ant &  $90.02$  &  $8.25\times 10^{-5}$  &  $89.98$  &  $8.21\times 10^{-5}$  \\ 
  Walker &  $89.94$  &  $3.47\times 10^{-7}$  &  $89.91$  &  $3.30\times 10^{-7}$  \\ 
  Swimmer &  $90.13$  &  $1.46\times 10^{-7}$  &  $89.96$  &  $1.29\times 10^{-7}$  \\ 
  Hopper &  $89.92$  &  $8.25\times 10^{-10}$  &  $89.92$  &  $8.23\times 10^{-10}$  \\ 
  Humanoid &  $89.94$  &  $4.95\times 10^{-8}$  &  $90.05$  &  $7.08\times 10^{-8}$  \\ 
 \midrule 
  Nominal & $90.00$ & - & $90.00$ & - \\ 
\bottomrule 
\end{tabular}
}
\vspace{-1em}
\end{table} 
\clearpage
\section{Additional experiments and analyses}
\label{appendix:additional-experiments}

\subsection{Conditional coverage of \cpgenrecal}
\label{appendix:conditional-coverage}

We analyze the improved length prediction intervals learned by \cpgenrecal~(Section~\ref{section:cqf}) by evaluating its \emph{conditional} coverage metrics and comparing with the baseline \cqr~method.

As conditional coverage is hard to reliably estimate from finite data, we consider two proxy metrics proposed in~\citep{feldman2021improving} that measure the independence between \emph{length} and \emph{indicator of coverage}:
\begin{itemize}[leftmargin=2em]
\item The correlation coefficient (Corr) between the following two random variables: the interval size $L=\length(C(X))$ and the indicator of coverage $V=\indic{Y\in \what{C}(X)}$. A (population) correlation of 0 is a necessary (but not sufficient) condition of perfect conditional coverage~\citep{feldman2021improving}. Here we measure the absolute correlation, which is smaller the better.
\item HSIC: A more sophisticated correlation metric between $L$ and $V$ that takes into account nonlinear correlation structures. A (population) HSIC of 0 is a necessary and sufficient condition of the independence between $L$ and $V$. We estimate HSIC on the finite test data using the method in~\citep{feldman2021improving}.
\end{itemize}

Table~\ref{table:conformal-finetuning-corr} reports the results. Observe that while our \cpgenrecal~improves the length, it achieves worse (higher) Correlation/HSIC than the baseline \cqr, which is expected as length and conditional coverage often come as a trade-off.

\begin{table}[h] 
\centering 
\small 
\caption{\small {\bf Conditional coverage results for conformal quantile finetuning} on real-data regression tasks at level $1-\alpha=90\%$. For each method we report the (absolute) correlation coefficient as well as the HSIC metric between length and indicator of coverage. All results are averaged over 8 random seeds.} 
\vspace{-1em} 
\centerline{ 
\label{table:conformal-finetuning-corr} 
\begin{tabular}{lcccccc} 
\toprule 
 & \multicolumn{3}{c}{\cqr}  & \multicolumn{3}{c}{{\tt QR} + \cpgenrecal~(ours)} \\ 
\cmidrule(r){2-4}  \cmidrule(r){5-7} 
Dataset & Corr($\downarrow$) & HSIC($\downarrow$) & Length($\downarrow$) & Corr($\downarrow$) & HSIC($\downarrow$) & Length($\downarrow$) \\ 
\midrule 
  MEPS\_19 &  $\mathbf{ 0.022 }$ &  $\mathbf{ 3.03\times 10^{-5} }$ &  $1.167$  &  $0.049$  &  $1.77\times 10^{-4}$  &  $\mathbf{0.890}$ \\ 
  MEPS\_20 &  $\mathbf{ 0.032 }$ &  $\mathbf{ 3.63\times 10^{-5} }$ &  $1.165$  &  $0.113$  &  $2.66\times 10^{-4}$  &  $\mathbf{0.830}$ \\ 
  MEPS\_21 &  $\mathbf{ 0.029 }$ &  $\mathbf{ 4.72\times 10^{-5} }$ &  $1.145$  &  $0.068$  &  $2.20\times 10^{-4}$  &  $\mathbf{0.962}$ \\ 
  Facebook\_1 &  $\mathbf{ 0.029 }$ &  $\mathbf{ 1.27\times 10^{-5} }$ &  $0.555$  &  $0.175$  &  $7.34\times 10^{-4}$  &  $\mathbf{0.384}$ \\ 
  Facebook\_2 &  $\mathbf{ 0.024 }$ &  $\mathbf{ 1.16\times 10^{-5} }$ &  $0.491$  &  $0.116$  &  $2.68\times 10^{-4}$  &  $\mathbf{0.364}$ \\ 
  kin8nm &  $\mathbf{ 0.031 }$ &  $\mathbf{ 4.85\times 10^{-5} }$ &  $1.214$  &  $0.084$  &  $9.32\times 10^{-5}$  &  $\mathbf{1.173}$ \\ 
  naval &  $0.091$  &  $\mathbf{ 1.05\times 10^{-5} }$ &  $3.095$  &  $\mathbf{ 0.064 }$ &  $2.16\times 10^{-5}$  &  $\mathbf{3.077}$ \\ 
  bio &  $\mathbf{ 0.026 }$ &  $\mathbf{ 4.15\times 10^{-5} }$ &  $2.271$  &  $0.041$  &  $1.09\times 10^{-4}$  &  $\mathbf{2.164}$ \\ 
  blog\_data &  $\mathbf{ 0.013 }$ &  $\mathbf{ 4.60\times 10^{-5} }$ &  $0.605$  &  $0.141$  &  $5.75\times 10^{-4}$  &  $\mathbf{0.496}$ \\ 
 \bottomrule 
\end{tabular} 
} 
\vspace{-1em} 
\end{table}

\subsection{Alternative tweaks for \cqr~baseline}
\label{appendix:alternative-tweaks}

Here we test two additional tweaked versions of the \cqr~baseline in the prediction interval experiment of Section~\ref{section:cqf}:
\begin{itemize}[leftmargin=2em]
\item \cqr-$D_{\train}\cup D_{\cal}$: Use dataset $D_{\train}\cup D_{\cal}$ for training the base quantile regressor, then conformalize on $D_{\recal}$.
\item \qrpinball: Train the base quantile regressor on $D_{\train}$, and finetune the last linear layer on $D_{\cal}$ using the pinball loss (same as training), and conformalize on $D_{\recal}$.
\end{itemize}
Optimization details about these two methods are described in Section~\ref{appendix:opt-detail-tweaks}.

\paragraph{Result}
Table~\ref{table:conformal-finetuning-tweak} reports the results for these two tweaked baselines, in comparison with our original baseline \cqr~as well as our proposed {\tt QR} + \cpgenrecal. Observe that using more training data (\cqr-$D_{\train}\cup D_{\cal}$) improves the length slightly on some datasets but not all. In contrast, \qrpinball~is unable to improve either the pinball loss or the length over the base \cqr (observe that \qrpinball~uses the same set of training data as \cqr-$D_{\train}\cup D_{\cal}$ but uses a less expressive model in the finetuning stage). Overall, on almost all datasets (except for kin8nm), our \cpgenrecal~still achieves the best length.

\begin{table}[h] 
\centering 
\small 
\caption{\small {\bf Results for conformal quantile finetuning} on real-data regression tasks at level $1-\alpha=90\%$. Here we compare our \cpgenrecal~method with {\bf tweaked versions of the baseline \cqr~method}. All results are averaged over the same 8 random seeds as in Table~\ref{table:conformal-finetuning}. All (average) coverages are within $(90\pm 0.5)\%$ and omitted here.} 
\vspace{-1em} 
\centerline{ 
\label{table:conformal-finetuning-tweak} 
\begin{tabular}{lcccccccc} 
\toprule 
 & \multicolumn{2}{c}{\cqr-$D_{\train}$} & \multicolumn{2}{c}{\cqr-$D_{\train}\cup D_{\cal}$}  & \multicolumn{2}{c}{\qrpinball} & \multicolumn{2}{c}{{\tt QR} + \cpgenrecal~(ours)} \\ 
\cmidrule(r){2-3}  \cmidrule(r){4-5} \cmidrule(r){6-7} \cmidrule(r){8-9} 
Dataset & Length & $L_{\rm pinball}^{\rm test}$ & Length & $L_{\rm pinball}^{\rm test}$ & Length & $L_{\rm pinball}^{\rm test}$ & Length & $L_{\rm pinball}^{\rm test}$ \\ 
\midrule 
  MEPS\_19 &  $1.167$  &  $0.112$  &  $1.171$  &  $0.111$  &  $1.192$  &  $0.112$  &  $\mathbf{0.890}$ &  $0.131$  \\ 
 MEPS\_20 &  $1.165$  &  $0.117$  &  $1.179$  &  $0.114$  &  $1.190$  &  $0.117$  &  $\mathbf{0.830}$ &  $0.141$  \\ 
 MEPS\_21 &  $1.145$  &  $0.107$  &  $1.150$  &  $0.106$  &  $1.249$  &  $0.107$  &  $\mathbf{0.962}$ &  $0.129$  \\ 
 Facebook\_1 &  $0.555$  &  $0.052$  &  $0.549$  &  $0.051$  &  $0.578$  &  $0.052$  &  $\mathbf{0.384}$ &  $0.090$  \\ 
 Facebook\_2 &  $0.491$  &  $0.044$  &  $0.472$  &  $0.042$  &  $0.523$  &  $0.044$  &  $\mathbf{0.364}$ &  $0.092$  \\ 
 kin8nm &  $1.214$  &  $0.076$  &  $\mathbf{1.165}$  &  $0.072$  &  $1.232$  &  $0.075$  &  $1.173$  &  $0.078$  \\ 
 naval &  $3.095$  &  $0.164$  &  $3.089$  &  $0.164$  &  $3.096$  &  $0.164$  &  $\mathbf{3.077}$ &  $0.166$  \\ 
 bio &  $2.271$  &  $0.130$  &  $2.240$  &  $0.128$  &  $2.271$  &  $0.130$  &  $\mathbf{2.164}$ &  $0.148$  \\ 
 blog\_data &  $0.605$  &  $0.058$  &  $0.551$  &  $0.056$  &  $0.660$  &  $0.058$  &  $\mathbf{0.496}$ &  $0.107$  \\ 
\bottomrule 
\end{tabular} 
} 
\vspace{-1em} 
\end{table}

\subsubsection{Optimization details}
\label{appendix:opt-detail-tweaks}

\cqr-$D_{\train}\cup D_{\cal}$: Our original \cqr~baseline used $D_{\cal}$ for monitoring validation loss and automatically determining the early stopping (cf. Section~\ref{appendix:cqf-details}), and $D_{\recal}$ for conformalization. To optimize on $D_{\train}\cup D_{\cal}$, we do not use automatic learning rate decay and early stopping, but instead manually picked the number of epochs and corresponding learning rate decay schedule that is close to average runs of the \cqr~method on each dataset. This choice ensures that our new baseline still gets to use see the exact same amount of data for conformalizing ($D_{\recal}$) and testing ($D_{\test}$), and has a optimization setup as close as possible the original \cqr~baseline.

More concretely, we optimize for 800 epochs for \{MEPS\_19, MEPS\_20, MEPS\_21\}, 1500 epochs for \{Facebook\_1, Facebook\_2, blog\_data\}, 6000 epochs for kin8nm, 350 epochs for naval, and 2500 epochs for bio. For all datasets, the learning rate decays by 10x twice, at 90\% and 95\% of the total epochs.

\qrpinball: We finetuned on $D_{\cal}$ with 1000 epochs and batch size 256. The learning rate was chosen within $\{10^{-2}, 10^{-3}\}$ and the results are not too different for these two choices (length difference is within $0.010$ for these two choices, and there is no overall winner). We presented the results with learning rate $10^{-3}$.

\subsection{Additional \maxscore~baseline for multi-output regression}
\label{appendix:maxscore}

We test one additional baseline method for the multi-output regression experiment in Section~\ref{section:multi-output}:
\begin{itemize}[leftmargin=2em]
\item \maxscore: Here we consider the \emph{hypercube}-shaped predictor
  \begin{align}
    \label{equation:maxscore}
    C_t(x) = \prod_{i=1}^{d_{\rm out}} [\what{f}_i(x) - t, f_i(x) + t],
  \end{align}
  and use conformal prediction on $D_{\cal}\cup D_{\recal}$ to compute a conformalized $\what{t}$ and the final prediction set $C_{\what{t}}(x)$. In other words, we perform standard conformal prediction with score function $\|y - \what{f}(x)\|_\infty$.
\end{itemize}
We remark that both the \coordrecal~and the \maxscore~baseline methods are special instances of \cpgenrecal with some fixed $\theta_0$: In our parametrization, $u=(\theta, t)$, $\theta$ determines the \emph{shape} (i.e. relative ratios between the $u_i's$) whereas $t$ determines the \emph{size}. Therefore, \maxscore~can be thought of as choosing $\theta_0$ to be the all-ones ratio (i.e. hypercube-shaped), whereas \coordrecal can be thought of as choosing $\theta_0$ from a coordinate-wise one dimensional conformal prediction.

\paragraph{Result}
Table~\ref{table:multi-output-additional} reports the result for \maxscore. Compared with the existing baseline \coordrecal, \maxscore~achieves better volume on the Cartpole dataset but worse volume on almost all other datasets (except for Swimmer where their volumes are similar). Further, note that \maxscore~achieves significantly higher volumes for certain datsets (Ant, Humanoid). Our inspection shows that this due to the fact that there are a certain number of hard-to-predict state dimensions (and many other easy-to-predict state dimensions) for these two datasets. Therefore, \coordrecal~which builds on \coord~adapts to this structure and uses only a high length on these dimensions only, whereas the \maxscore~method pays this max conformal score on all dimensions to yield an unnecessarily high volume.

We remark that our \cpgenrecal~still performs significantly better than both baselines.

\begin{table}[h] 
\centering 
\small 
\caption{\small {\bf Results for multi-output regression} on next-state prediction tasks, at level $1-\alpha=90\%$. The setting is the same as in Table~\ref{table:multi-output} (with the same 8 random seeds), and here we compare additionally with the \maxscore~baseline method described in~\eqref{equation:maxscore}.} 
\vspace{-1em} 
\label{table:multi-output-additional} 
\centerline{
\begin{tabular}{lcccccc} 
\toprule 
 & \multicolumn{2}{c}{ \coordrecal } & \multicolumn{2}{c}{ \maxscore }& \multicolumn{2}{c}{{ \cpgenrecal~(ours)}} \\ 
\cmidrule(r){2-3}  \cmidrule(r){4-5}  \cmidrule(r){6-7} 
Dataset & Coverage(\%) & Volume & Coverage(\%) & Volume & Coverage(\%) & Volume \\ 
\midrule 
  Cartpole &  $90.17$  &  $5.10\times 10^{-6}$  & $90.10$ & $3.07\times 10^{-6}$ &  $90.12$  &  $\mathbf{2.30\times 10^{-6}}$  \\ 
  Half-Cheetah &  $90.06$  &  $1.23\times 10^{-6}$  & $89.96$ & $1.72\times 10^{-4}$ &  $90.02$  &  $\mathbf{9.07\times 10^{-7}}$  \\ 
  Ant &  $89.99$  &  $1.70\times 10^{-4}$ & $90.06$ & $3.46\times 10^{2}$ &  $90.02$  &  $\mathbf{8.25\times 10^{-5}}$  \\ 
  Walker &   $90.01$  &  $7.33\times 10^{-7}$  & $90.02$ & $1.03\times 10^{-2}$ &  $89.94$  &  $\mathbf{3.47\times 10^{-7}}$  \\ 
  Swimmer &  $89.90$  &  $2.22\times 10^{-6}$  & $90.08$ & $2.21\times 10^{-6}$ &  $90.13$  &  $\mathbf{1.46\times 10^{-7}}$  \\ 
  Hopper &   $90.02$  &  $1.01\times 10^{-9}$  & $89.96$ & $1.29\times 10^{-8}$ &  $89.92$  &  $\mathbf{8.25\times 10^{-10}}$  \\ 
  Humanoid &   $89.95$  &  $8.53\times 10^{-8}$ & $89.98$ & $2.48\times 10^7$  &  $89.94$  &  $\mathbf{4.95\times 10^{-8}}$  \\ 
 \midrule 
  Nominal ($1-\alpha$) & $90.00$ & - & $90.00$ & - & $90.00$ & - \\ 
\bottomrule 
\end{tabular} 
} 
\end{table}

\subsection{100 random seeds and standard deviation}

Here we repeat the experiments in Section~\ref{section:cqf}  \&~\ref{section:multi-output} with 100 random seeds and the exact same setups. We report the mean and standard deviations in Table~\ref{table:conformal-finetuning-100} for the prediction intervals experiment (Section~\ref{section:cqf}), Table~\ref{table:multi-output-100-mean} for the mean and Table~\ref{table:multi-output-100-std} for the standard deviation for the multi-output regression experiment (Section~\ref{section:multi-output}).

\begin{table}[h]
  {
\centering 
\small 
\caption{\small {\bf Results for conformal quantile finetuning} on real-data regression tasks at level $1-\alpha=90\%$. For each method we report the (test) coverage, length, and pinball loss of the corresponding base quantile predictor. All results are averaged over 100 random seeds.} 
\vspace{-1em} 
\centerline{ 
\label{table:conformal-finetuning-100} 
\begin{tabular}{lcccccc} 
\toprule 
 & \multicolumn{3}{c}{\cqr}  & \multicolumn{3}{c}{{\tt QR} + \cpgenrecal~(ours)} \\ 
\cmidrule(r){2-4}  \cmidrule(r){5-7} 
Dataset & Coverage(\%) & Length & $L_{\rm pinball}^{\rm test}$ & Coverage(\%) & Length & $L_{\rm pinball}^{\rm test}$ \\ 
\midrule 
  MEPS\_19 &  $89.92$  $\pm 1.16$  &  $1.147$  $\pm 0.057$  &  $0.107$  $\pm 0.013$  &  $89.95$  $\pm 0.012$  &  $\mathbf{0.895 \pm 0.126 }$ &  $0.130$  $\pm 0.015$  \\ 
  MEPS\_20 &  $89.90$  $\pm 0.98$  &  $1.164$  $\pm 0.054$  &  $0.109$  $\pm 0.012$  &  $89.97$  $\pm 0.011$  &  $\mathbf{0.872 \pm 0.113 }$ &  $0.131$  $\pm 0.015$  \\ 
  MEPS\_21 &  $90.00$  $\pm 0.94$  &  $1.162$  $\pm 0.056$  &  $0.104$  $\pm 0.011$  &  $90.08$  $\pm 0.011$  &  $\mathbf{0.910 \pm 0.133 }$ &  $0.126$  $\pm 0.013$  \\ 
  Facebook\_1 &  $90.12$  $\pm 0.71$  &  $0.540$  $\pm 0.040$  &  $0.050$  $\pm 0.007$  &  $90.07$  $\pm 0.007$  &  $\mathbf{0.382 \pm 0.051 }$ &  $0.089$  $\pm 0.009$  \\ 
  Facebook\_2 &  $90.04$  $\pm 0.50$  &  $0.497$  $\pm 0.028$  &  $0.044$  $\pm 0.005$  &  $90.06$  $\pm 0.005$  &  $\mathbf{0.389 \pm 0.075 }$ &  $0.091$  $\pm 0.007$  \\ 
  kin8nm &  $90.34$  $\pm 1.37$  &  $1.238$  $\pm 0.067$  &  $0.076$  $\pm 0.004$  &  $90.31$  $\pm 0.013$  &  $\mathbf{1.216 \pm 0.068 }$ &  $0.080$  $\pm 0.004$  \\ 
  naval &  $89.99$  $\pm 1.13$  &  $3.101$  $\pm 0.015$  &  $0.164$  $\pm 0.001$  &  $89.95$  $\pm 0.011$  &  $\mathbf{3.095 \pm 0.028 }$ &  $0.167$  $\pm 0.001$  \\ 
  bio &  $90.00$  $\pm 0.69$  &  $2.261$  $\pm 0.033$  &  $0.130$  $\pm 0.002$  &  $89.97$  $\pm 0.005$  &  $\mathbf{2.154 \pm 0.031 }$ &  $0.148$  $\pm 0.003$  \\ 
  blog\_data &  $89.99$  $\pm 0.60$  &  $0.593$  $\pm 0.033$  &  $0.058$  $\pm 0.005$  &  $89.95$  $\pm 0.007$  &  $\mathbf{0.460 \pm 0.075 }$ &  $0.104$  $\pm 0.006$  \\ 
 \midrule 
  Nominal ($1-\alpha$) & $90.00$ & - & - & $90.00$ & - & - \\ 
\bottomrule 
\end{tabular} 
} 
\vspace{-1em}
}
\end{table}

\begin{table}[h] 
\centering 
\small 
\caption{\small {\bf Results for multi-output regression (mean)} on next-state prediction tasks, at level $1-\alpha=90\%$. For each method we report the (test) coverage and volume of its learned box-shaped prediction set. The reported volume is the ``halfened'' version $\prod_{i=1}^{d_{\out}} u_i$. All results are averaged over 100 random seeds.} 
\vspace{-1em} 
\label{table:multi-output-100-mean} 
\centerline{ 
\begin{tabular}{lcccccc} 
\toprule 
 & \multicolumn{2}{c}{ \coord }  & \multicolumn{2}{c}{ \coordrecal } & \multicolumn{2}{c}{{ \cpgenrecal~(ours)}} \\ 
\cmidrule(r){2-3}  \cmidrule(r){4-5}  \cmidrule(r){6-7} 
Dataset & Coverage(\%) & Volume & Coverage(\%) & Volume & Coverage(\%) & Volume \\ 
\midrule 
  Cartpole &  $94.30$  &  $1.14\times 10^{-5}$  &  $90.02$  &  $4.80\times 10^{-6}$  &  $90.03$  &  $\mathbf{2.05\times 10^{-6}}$  \\ 
  Half-Cheetah &  $93.84$  &  $1.05\times 10^{-5}$  &  $90.00$  &  $1.22\times 10^{-6}$  &  $90.02$  &  $\mathbf{9.01\times 10^{-7}}$  \\ 
  Ant &  $93.53$  &  $3.26\times 10^{-3}$  &  $89.94$  &  $1.75\times 10^{-4}$  &  $89.98$  &  $\mathbf{9.22\times 10^{-5}}$  \\ 
  Walker &  $94.52$  &  $2.82\times 10^{-5}$  &  $89.99$  &  $7.48\times 10^{-7}$  &  $90.00$  &  $\mathbf{3.74\times 10^{-7}}$  \\ 
  Swimmer &  $95.65$  &  $2.82\times 10^{-5}$  &  $90.02$  &  $2.18\times 10^{-6}$  &  $90.01$  &  $\mathbf{1.24\times 10^{-7}}$  \\ 
  Hopper &  $92.95$  &  $2.53\times 10^{-9}$  &  $89.98$  &  $8.86\times 10^{-10}$  &  $89.99$  &  $\mathbf{7.04\times 10^{-10}}$  \\ 
  Humanoid &  $94.87$  &  $7.43\times 10^{-4}$  &  $90.06$  &  $1.69\times 10^{-7}$  &  $90.03$  &  $\mathbf{8.95\times 10^{-8}}$  \\ 
 \midrule 
  Nominal ($1-\alpha$) & $90.00$ & - & $90.00$ & - & $90.00$ & - \\ 
\bottomrule 
\end{tabular} 
} 
\end{table}

\begin{table}[h] 
\centering 
\small 
\caption{\small {\bf Results for multi-output regression (standard deviation)} on next-state prediction tasks, at level $1-\alpha=90\%$. For each method we report the (test) coverage and volume of its learned box-shaped prediction set. The reported volume is the ``halfened'' version $\prod_{i=1}^{d_{\out}} u_i$. All standard deviations are computed over 100 random seeds.} 
\vspace{-1em} 
\label{table:multi-output-100-std} 
\centerline{ 
\begin{tabular}{lcccccc} 
\toprule 
 & \multicolumn{2}{c}{ \coord }  & \multicolumn{2}{c}{ \coordrecal } & \multicolumn{2}{c}{{ \cpgenrecal~(ours)}} \\ 
\cmidrule(r){2-3}  \cmidrule(r){4-5}  \cmidrule(r){6-7} 
Dataset & Coverage(\%) & Volume & Coverage(\%) & Volume & Coverage(\%) & Volume \\ 
\midrule 
  Cartpole &  $0.35$  &  $3.57\times 10^{-6}$  &  $0.31$  &  $1.33\times 10^{-6}$  &  $0.27$  &  $4.46\times 10^{-7}$  \\ 
  Half-Cheetah &  $0.25$  &  $2.46\times 10^{-6}$  &  $0.32$  &  $2.75\times 10^{-7}$  &  $0.31$  &  $2.10\times 10^{-7}$  \\ 
  Ant &  $0.27$  &  $1.50\times 10^{-3}$  &  $0.29$  &  $8.06\times 10^{-5}$  &  $0.29$  &  $4.15\times 10^{-5}$  \\ 
  Walker &  $0.24$  &  $8.81\times 10^{-6}$  &  $0.30$  &  $2.18\times 10^{-7}$  &  $0.29$  &  $1.12\times 10^{-7}$  \\ 
  Swimmer &  $0.24$  &  $6.47\times 10^{-6}$  &  $0.29$  &  $5.83\times 10^{-7}$  &  $0.33$  &  $3.36\times 10^{-8}$  \\ 
  Hopper &  $0.29$  &  $6.83\times 10^{-10}$  &  $0.29$  &  $2.26\times 10^{-10}$  &  $0.33$  &  $1.97\times 10^{-10}$  \\ 
  Humanoid &  $0.23$  &  $1.22\times 10^{-3}$  &  $0.30$  &  $2.71\times 10^{-7}$  &  $0.29$  &  $1.47\times 10^{-7}$  \\ 
 \bottomrule 
\end{tabular} 
} 
\end{table}

\end{document}